\newtheorem{false_thm}[theorem]{Claim}
\def\iftodo{\iffalse} 
\def\ifcomments{\iffalse} 
\newcommand{\cj}[1]{\ifcomments \textcolor{red}{CJ: #1} \fi}
\begin{document}

\title{Metrizing Weak Convergence\\ with Maximum Mean Discrepancies}


\author{\name Carl-Johann Simon-Gabriel \email cjsg@ethz.ch \\
  \addr Institute for Machine Learning\\
  ETH Zürich, Switzerland
  \AND
  \name Alessandro Barp \email ab2286@cam.ac.uk \\
  \addr Department of Engineering\\
  University of Cambridge, Alan Turing Institute, United Kingdom
  \AND
  \name Bernhard Sch{\"o}lkopf \email bs@tue.mpg.de\\
  \addr Empirical Inference Department\\
  MPI for Intelligent Systems, T{\"u}bingen, Germany
  \AND
  \name Lester Mackey \email lmackey@microsoft.com\\
  \addr Microsoft Research\\
  Cambridge, MA, USA}


\maketitle

\begin{abstract}
   This paper characterizes the maximum mean discrepancies (MMD) that metrize
   the weak convergence of probability measures for a wide class of kernels.
   More precisely,
   we prove that, on a
   locally compact, non-compact, Hausdorff space, the MMD of a bounded continuous Borel measurable kernel $\k$, whose RKHS-functions vanish at infinity (i.e.,\ $\Hk
   \subset  \C{}{0}$), metrizes the weak convergence of probability measures if
   and only if $\k$ is continuous and \ispd over all signed, finite, regular
   Borel measures. 
   We also correct a prior result of 
   \citet[JMLR][Thm.~12]{simon18kde}
   by showing that
   there exist both bounded continuous \ispd kernels that do not metrize weak convergence and bounded continuous non-\ispd kernels that do metrize it.
\end{abstract}

\begin{keywords}
  Maximum Mean Discrepancy, Metrization of weak convergence, Kernel mean embeddings, Characteristic kernels, Integrally strictly positive definite kernels
\end{keywords}

\glsresetall  

\section{Introduction}


Although the mathematical and statistical literature has studied kernel mean
embeddings (KMEs) and maximum mean discrepancies (MMDs) at least since the
seventies \citep{guilbart78etude}, the machine learning community re-discovered
and applied them only since the late 2000s \cite{smola07hilbert}. A KME with
reproducing kernel $\k$ is a map from measures $\mu$ --~in particular
probability distributions~--  to  functions $f_{\mu}$ in the reproducing
kernel Hilbert space (RKHS) $\Hk$ of $\k$. The RKHS distance between two
embeddings then yields a semi-metric $d_{\k}$ on measures, called the maximum
mean discrepancy (MMD), which can be used to compare two measures or
distributions $\mu$ and $\nu$: $d_{\k}(\mu, \nu) := \normk{f_\mu - f_\nu}$.

Their theoretical tractability and computational flexibility has allowed MMDs
to flourish in many areas of machine learning that require comparing
probability distributions, such as two-sample testing (compare two discrete
distributions \cite{gretton12kernel}), sample quality measurement and goodness-of-fit testing (compare a
discrete distribution to a reference distribution, as in
\citealt{chwialkowski16kernel,liu16kernelized,gorham17measuring,jitkrittum17linear,huggins2018random}),
generative model fitting (compare distributions of fake and real data; see
\citealt{dziugaite15training,sutherland17generative,feng2017learning,pu2017vae,briol2019statistical}), 
de novo sampling and quadrature \citep{chen10super,huszar2012optimally,liu16stein,chen18stein,futami2019bayesian,chen2019stein}, 
importance sampling \citep{liu17black-box,hodgkinson2020reproducing}, 
and thinning \citep{riabiz2020optimal}.

For most applications, one seeks a kernel $\k$ whose MMD can separate all
probability distributions $P, Q$, meaning that, $d_{\k}(P,Q) = 0$ (if and) only
if $Q=P$. Such kernels are said to be \emph{characteristic} (to the set of
probability distributions $\Mprob$). If for example we optimize a parametric
distribution $Q$ to match a target $P$ by minimizing their MMD $d_{\k}(P,Q)$,
it is rather natural to require that it be minimized only if $Q$ perfectly
matches $P$, i.e.\ $Q=P$. Another natural, but a priori stronger requirement,
is that when $Q$ gets closer to $P$ in MMD, for example, if $d_{\k}(Q, P) \to
0$, we would like $Q$ to ``truly'' converge to $P$, where ``truly'' means
``for some other standard and/or more familiar notion of convergence''.

Although several standard notions may come to mind --~convergence in
KL-divergence, in total variation or in Hellinger distance~--, many are too
strong for our purposes which often require handling discrete data. For
example, even if $\xx \to \xxi$, the Dirac masses $\delta_{\xx}$ will not
converge to $\delta_{\xxi}$ in total variation or KL-divergence unless $\xx$ is
eventually equal to $\xxi$. Said differently, a sequence of deterministic
variables would not converge in total variation unless it was eventually
constant. Since in practice MMDs are frequently used to compare samples or
empirical (hence discrete) distributions, it comes as no surprise that MMD
convergence cannot, in general, ensure these strong types of convergence.
Instead we will opt for a standard, yet comparatively weak notion of
convergence, known as \emph{weak}  or \emph{narrow convergence} or
\emph{convergence in distribution}. Specifically, the central question of this
paper will be
\begin{quote}
    When is convergence in MMD metric equivalent to weak convergence on
    $\Mprob$?
\end{quote}
In that case, we will say that the kernel $\k$ \emph{metrizes the weak
convergence of probability measures}.
This question lies at the heart of the learning applications described above,
as the quality of these inferences depends on the metrization properties of the
chosen kernel \citep{zhu2018universal,zhu2019asymptotically,ansari2019characteristic,li17mmdgan}.
When the kernel MMD fails to reflect the convergence of distributions, the results are at best inaccurate and at worst invalid.

\subsection{Previous results, contributions and paper structure}

The aforementioned question was studied as early as \citeyear{guilbart78etude} by \citet{guilbart78etude} in his
thesis.  On separable metric spaces, he
characterized the kernels for which weak convergence implies convergence in MMD
(Thm.1.D.I). Conversely, he showed that, in some cases, MMD convergence can also
imply weak convergence, meaning that there do exist kernels that metrize weak
convergence. He provided a concrete recipe to construct such kernels (Thm.1.E.I
\& Lem.3.E.I) and used it to exhibit some examples. However,
\citet{guilbart78etude} did not characterize these kernels, and left most
standard kernels (Gaussian, Laplacian, etc.) aside.

These initial results went largely unnoticed by the ML community, and it is only much
later, with the emergence and the new applications of MMDs in applied
statistics, that the important question of weak convergence metrization
re-surfaced.  \Citet{sriperumbudur10hilbert} in particular presented sufficient
conditions under which the MMD metrizes weak convergence when the underlying
input space is either $\mathbb{R}^d$ (Thm.24) or a compact metric space
(Thm.23). \Citet[][Thm.3.2]{sriperumbudur13optimal} then considerably improved
these results and showed the following theorem.

\begin{theorem}[\citealt{sriperumbudur13optimal}]\label{thm:sri}
A continuous, bounded, \ispd kernel over a locally compact Polish
space $\sX$ such that $\Hk \subset \C{}{0}$ metrizes weak convergence.
\end{theorem}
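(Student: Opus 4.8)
The plan is to establish the sequential equivalence $d_{\k}(P_{n},P)\to 0 \Leftrightarrow P_{n}\to P$ weakly for sequences $(P_{n})$ in $\Mprob$; since $\k$ is \ispd, $d_{\k}(P,Q)^{2}=\iint\k\,\diff(P-Q)\,\diff(P-Q)$ is positive whenever $P\neq Q$, so $d_{\k}$ is a metric on $\Mprob$, and because both the weak topology and the $d_{\k}$-topology on $\Mprob$ are metrizable (the former e.g.\ by the L\'evy--Prokhorov metric), this sequential equivalence is enough to conclude that $\k$ metrizes weak convergence. I would carry out both directions using the kernel mean embedding $\mu_{\k}(\mu):=\int\k(\cdot,x)\,\diff\mu(x)$, which is a well-defined Bochner integral in $\Hk$ for every finite signed Borel measure $\mu$ because $x\mapsto\k(\cdot,x)$ is continuous with $\normk{\k(\cdot,x)}=\sqrt{\k(x,x)}$ bounded, and which satisfies $\ipdk{\mu_{\k}(\mu)}{f}=\int f\,\diff\mu$ for all $f\in\Hk$ and $\normk{\mu_{\k}(\mu)}^{2}=\iint\k\,\diff\mu\,\diff\mu$.

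For the ``weak $\Rightarrow$ MMD'' direction: since $\k$ is bounded and continuous, $\Hk\subset\C{}{b}$, so weak convergence $P_{n}\to P$ forces $\ipdk{\mu_{\k}(P_{n})}{f}=\int f\,\diff P_{n}\to\int f\,\diff P=\ipdk{\mu_{\k}(P)}{f}$ for every $f\in\Hk$, i.e.\ $\mu_{\k}(P_{n})\to\mu_{\k}(P)$ weakly in $\Hk$; and since $P_{n}\otimes P_{n}\to P\otimes P$ weakly with $\k\in\C{}{b}(\sX\times\sX)$, the norms converge, $\normk{\mu_{\k}(P_{n})}^{2}=\iint\k\,\diff P_{n}\,\diff P_{n}\to\iint\k\,\diff P\,\diff P=\normk{\mu_{\k}(P)}^{2}$. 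Weak convergence together with convergence of norms in a Hilbert space gives norm convergence, hence $d_{\k}(P_{n},P)\to 0$. (This direction uses only boundedness and continuity of $\k$.)

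For the converse, ``MMD $\Rightarrow$ weak'': it suffices to show that every subsequence of $(P_{n})$ has a further subsequence converging weakly to $P$. Fix a subsequence. Since $\sX$ is locally compact and second countable, $\C{}{0}$ is separable and $\C{}{0}(\sX)^{\ast}\cong\sM_{r}$ (Riesz representation), so by Banach--Alaoglu the norm-bounded sequence $(P_{n})$ admits a vaguely (weak-$\ast$) convergent sub-subsequence $P_{n_{j}}\to\nu$ with $\nu\geq 0$ and $\nu(\sX)\leq 1$. Here the hypothesis $\Hk\subset\C{}{0}$ is crucial, since it lets vague convergence control the RKHS test functions: $\ipdk{\mu_{\k}(P_{n_{j}})}{f}=\int f\,\diff P_{n_{j}}\to\int f\,\diff\nu=\ipdk{\mu_{\k}(\nu)}{f}$ for every $f\in\Hk$, so $\mu_{\k}(P_{n_{j}})\to\mu_{\k}(\nu)$ weakly in $\Hk$. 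But $\mu_{\k}(P_{n_{j}})\to\mu_{\k}(P)$ in norm, whence $\mu_{\k}(\nu)=\mu_{\k}(P)$; and since $\k$ is \ispd over $\sM_{r}$, the map $\mu_{\k}$ is injective there, so $\nu=P$. In particular $\nu$ is a probability measure, so no mass has escaped to infinity, and the vague convergence $P_{n_{j}}\to P$ therefore upgrades to weak convergence (standard on locally compact Hausdorff spaces, via inner regularity of $P$ and Urysohn's lemma). This supplies the required sub-subsequence.

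I expect the main obstacle to be precisely this identification step in the converse: promoting the merely vague subsequential limit furnished by Banach--Alaoglu to a genuine weak limit, and showing it equals $P$. This is exactly where the two nontrivial hypotheses enter --- $\Hk\subset\C{}{0}$, so that vague convergence ``sees'' the RKHS, and $\k$ being \ispd over $\sM_{r}$, so that the vague limit is pinned down to $P$ and no mass leaks off to infinity. The remaining ingredients --- separability of $\C{}{0}$, weak continuity of products of probability measures, and the upgrade from vague to weak convergence of probability measures --- are routine, though I would want to verify the last one carefully.
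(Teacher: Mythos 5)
Your proposal is correct, but it takes a genuinely different route from the one the paper uses (the paper proves this statement, minus the Polish assumption, as part of Lemma~\ref{lem:equi_topos}). For the hard direction (MMD $\Rightarrow$ weak), the paper never extracts a convergent subsequence: it notes that \ispd plus $\Hk \subset \C{}{0}$ makes $\Hk$ \emph{dense} in $\C{}{0}$, that $\Mprob$ is an equicontinuous subset of $(\C{}{0})'$, and hence that convergence tested against the dense set $\Hk$ upgrades to vague convergence directly to $P$ (Tr\`eves, Prop.~32.5), after which vague $\Rightarrow$ weak for probability limits finishes. You instead use Banach--Alaoglu to get a vaguely convergent sub-subsequence $P_{n_j}\to\nu$ with $\nu$ only a sub-probability measure, and then pin down $\nu = P$ via $\Hk\subset\C{}{0}$ (so the vague limit is seen by the RKHS) and injectivity of the embedding on all of $\Mf$ (so the sub-probability limit is forced to equal $P$, ruling out mass escaping to infinity). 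It is worth noting that your argument is structurally the same as the flawed proof of Claim~\ref{thm:simon} that the paper dissects in Section~\ref{sec:flaw} --- extract a vague limit, identify it with the MMD limit --- except that you correctly close the gap there (the vague limit need not lie in $\Mprob$) by invoking \ispd over $\Mf$ rather than mere characteristicness to $\Mprob$; this is precisely the hypothesis whose necessity Theorem~\ref{thm:charac} later establishes. The trade-off: your compactness route leans on the Polish assumption (separability of $\C{}{0}$ for sequential weak-$*$ compactness, and metrizability of the weak topology to reduce to sequences), whereas the paper's equicontinuity argument works for nets and is what allows Lemma~\ref{lem:equi_topos} to discard the Polish hypothesis altogether.
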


Let us explain and discuss this result, for it will help understanding our own
results. First, the theorem assumes that the underlying input space is locally
compact \emph{and} Polish. Both assumptions taken separately are extremely
general: all topological manifolds (f.ex.\ $\R^d$) and all discrete spaces are
locally compact; while all separable, complete, metric spaces are, by definition,
Polish, which includes any separable Banach space. This generality made
locally compact spaces on the one side and Polish spaces on the other standard
alternatives to do general measure and probability theory on. However, when
both assumptions get combined, they typically become quite restrictive. A
Banach space, for example, is locally compact only if it has finite dimension.
Therefore, combining both assumptions yields an important constraint that limits the applicability of the result: one would hope for  one or
the other, but not both.

Second, $\Hk \subset \C{}{0}$ means that the RKHS functions $f$ are assumed to
be continuous and vanish at infinity, i.e., for any $\epsilon > 0$, there
exists a compact $\sK \subset \sX$ for which $\sup_{\sX \back \sK} |f| \leq
\epsilon$. Many standard kernels satisfy this assumption which is typically
easy to verify (see Lem.\ref{lem:Hk_C0} below). This assumption is also rather
natural for problems involving finite measures on locally compact spaces $\sX$,
because the (continuous) dual of $\C{}{0}$ can be identified with the set of
finite signed measures on $\sX$ (Riesz representation theorem). However, it is
often inadequate on Polish spaces, because $\C{}{0}$ can typically be very
small. For example, on an infinite dimensional Banach space, $\C{}{0}$ contains
only the null function.  This suggests that, in a first step, it might be more
natural to get rid of the Polish assumption than the locally compact
assumption.

Third, the theorem assumes that the kernel is \ispd, meaning that its MMD
separates all finite signed measures $\Mf$: for any $\mu, \nu \in \Mf$,
$d_{\k}(\mu, \nu) = 0$ only if $\mu = \nu$. It is easy to see that an MMD that
metrizes weak convergence on the set of probability measures $\Mprob$, must
separate $\Mprob$. But by assuming that it even separates $\Mf$, which is
bigger than $\Mprob$, \citet{sriperumbudur13optimal}'s Thm.2 leaves open the
case of any MMD that separates $\Mprob$ but not $\Mf$.

In 2018, \citet[][Thm.12]{simon18kde} seemed to finally address all weaknesses
mentioned above by characterizing the metrization of weak convergence of
probability measures on locally compact spaces as follows.
\begin{false_thm}[\citealt{simon18kde}]\label{thm:simon}
    On a locally compact Hausdorff space, a bounded, Borel measurable kernel
    metrizes the weak convergence of probability measures if and only if it is continuous
    and characteristic (to the set of probability measures).
\end{false_thm}
This statement weakens Theorem~\ref{thm:sri}'s sufficient condition from
separation of $\Mf$ (\ispd kernel) to separation of $\Mprob$ (characteristic
kernel), which, as discussed, immediately yields the converse direction. It
gets rid of the Polish assumption and, surprisingly, also drops the assumption
$\Hk \subset \C{}{0}$.

\paragraph{Contributions.} 
Unfortunately, Claim~\ref{thm:simon} turns out to be wrong when the input
space $\sX$ is not compact. 
Our main result, Theorem~\ref{thm:charac},
provides a correction under the additional assumption that $\Hk \subset \C{}{0}$.
Crucially, we find that the compact and non-compact case are inherently
different.
Metrizing weak convergence on non-compact spaces requires \emph{strictly}
stronger conditions, since the MMD needs to separate, not only the probability measures --~as in the
compact case or in Claim~\ref{thm:simon}~-- but all finite
signed measures.
Put differently,  Theorem~\ref{thm:charac} drops the Polish
assumption from Theorem~\ref{thm:sri} and proves that its converse -- which is
too strong when $\sX$ is compact (see Thm.~\ref{thm:compact} \&
Prop.~\ref{prop:exist}) -- does hold when $\sX$ is non-compact.
An important implication is that any $\C{}{0}$ kernel that maps a probability measure to $0$ \emph{fails} to metrize weak convergence; in particular, this establishes that large classes of Stein kernels are unable to metrize convergence (see Rem.~\ref{rem:significance_of_main}).

Additionally, Corollary~\ref{cor:counterexamples} shows that Theorem~\ref{thm:charac}
does not hold without the assumption $\Hk \not \subset \C{}{0}$, while
Corollary~\ref{cor:extension} provides a \emph{sufficient} condition to metrize
weak convergence when $\Hk \not \subset \C{}{0}$.
Our results also complete the findings of \citet{chevyrev18signature}, who
constructed a counter-example showing that Claim~\ref{thm:simon} does not
hold on Polish spaces.
Overall, our findings show that the old quest to characterize weak-convergence
metrizing MMDs --~which we close under the quite general assumption that $\sX$
is locally compact and $\Hk \subset \C{}{0}$~-- depends in much more subtle
ways on the properties of the underlying space $\sX$ (being compact or not,
Polish or not, etc.) and the kernel $\k$ ($\Hk$ contained in $\C{}{0}$ or not)
than was previously thought.

\paragraph{Paper structure.} Section~\ref{sec:preliminaries} fixes notations
and makes a few important reminders and remarks. Section~\ref{sec:sufficient}
then extends \citet{sriperumbudur13optimal}'s Theorem~\ref{thm:sri} and gives a
general sufficient condition to metrize weak convergence when $\Hk \subset
\C{}{0}$. We then investigate whether this condition is also necessary, first
when the input space $\sX$ is compact (Sec.~\ref{sec:compact}), where it turns
out to be too strong (Thm.~\ref{thm:compact}); then when $\sX$ is not compact,
but locally compact (Sec.~\ref{sec:locally_compact}), in which case the
sufficient condition turns out to be necessary (Thm.~\ref{thm:charac}). We
finish with a few results in the general case (Sec.~\ref{sec:general}), when
$\Hk \not \subset \C{}{0}$: first a negative result
(Cor.~\ref{cor:counterexamples}) showing that the assumption $\Hk \subset
\C{}{0}$ cannot be dropped without replacement; then a result that
generalizes the condition $\Hk \subset \C{}{0}$. Section~\ref{sec:conclusion}
concludes.

\subsection{Notation, definitions, reminders}\label{sec:preliminaries}

\paragraph{Notations.} We use the letter $\k$ to denote a (reproducing) kernel (i.e.\ a
positive definite function) over a \emph{locally compact Hausdorff space $\sX$}
and $\Hk$ denotes its RKHS. $\C{}{b}$ is the space of bounded, continuous and
real valued \footnote{Our results extend to complex valued functions modulo
some obvious slight modifications.} functions $f$ over $\sX$. $\C{}{0}$ is its
subspace of functions that vanish at infinity, i.e.\ such that for any
$\epsilon > 0$, there exists a compact $\sK \subset \sX$ such that $|f| \leq
\epsilon$ on $\sX \back \sK$. We denote its (continuous) dual $(\C{}{0})'$ by
$\Mf$, which, by the Riesz representation theorem, can be identified with the
set of signed, $\sigma$-additive, finite, regular Borel measures. We recall
that a signed, $\sigma$-additive measure $\mu$ is said to be \emph{regular} if,
for any Borel measurable set $\sA$ and any $\epsilon > 0$, there exists a
compact $\sK$ and an open set $\sO$ in $\sX$ such that $\sK \subset \sA \subset
\sO$, $| \mu(\sA)-\mu(\sK)| \leq \epsilon$ and $|\mu(\sO) - \mu(\sA)| \leq
\epsilon$.  $\L(\mu)$ denotes the set of $\mu$-integrable functions (i.e.\
verifying $\int_{\sX} |f| \diff |\mu| < \infty$) and for any such function $f$
we write $\mu(f) := \int_{\sX} f \diff \mu$.
We denote by $\Mplus$, $\Mprob$ and
$\sM^{0}$ the subsets of $\Mf$ consisting of non-negative measures, of probability measures,
and of signed measures $\mu$ such that $\mu(\sX) = 0$ respectively.

\paragraph{Definition of KMEs and MMDs.} For a continuous, bounded kernel $\k$
and any $\mu \in \Mf$, $\int_{\sX} \normk{\k(.,\xx)} \diff \mu = \int_{\sX}
\sqrt{\k(\xx,\xx)} \diff \mu(\xx) < \infty$. By standard properties of the
so-called \emph{Bochner integral} \citep{schwabik05topics}, the
(Bochner-)integral 
\[
    f_{\mu}(\cdot) := \int_{\sX} \k(., \xx) \diff \mu(\xx)
\]
is a
well-defined function in the RKHS $\Hk$ of $\k$, and all functions $f \in \Hk$
are $\mu$-integrable and verify what we call the \emph{Pettis property}:
$\mu(f) = \ipdk{f_{\mu}}{f}$. In particular, for any $\mu, \nu \in \Mf$,
\[
    \ipdk{\mu}{\nu} := \ipdk{f_\mu}{f_\nu} = \mu \otimes \nu (\k)
        \qquad \text{and} \qquad
        \normk{\mu}^2 = \mu \otimes \mu (\k) \ ,
\]
where $\mu \otimes \nu$ denotes the (tensor) product measure between $\mu$ and
$\nu$. The maximum mean discrepancy (MMD) $d_{\k}(\mu, \nu)$ between $\mu$ and
$\nu$ is then defined as the RKHS distance between their embeddings:
\[
    d_{\k}(\mu,\nu) := \normk{\mu - \nu} = \normk{f_\mu - f_\nu} \ .
\]

\paragraph{Why bounded kernels?} In all our results, we will assume that the
kernel $\k$ is bounded. One may wonder if those results could be generalized to
unbounded kernels. To do so, one would need a definition of KMEs and MMDs that
allows unbounded kernels. Such generalizations do exist (see f.ex.\ Def.~1 in
\citealt{simon18kde}), but they all at least require that $\Hk \subset \L(\mu)$
for any embeddable measure $\mu$.  But if $\k$ is unbounded, then $\Hk$
contains an unbounded function $f$ \citep[][Cor.~3]{simon18kde}, and therefore,
it is easy to construct a probability measure $P$ such that $f \not \in \L(P)$.
So $P$ does not embed into $\Hk$ and the MMD is not defined over all
probability measures and cannot, a fortiori, metrize weak convergence there. 

\paragraph{Equivalence of universal, characteristic and \ispd kernels.} Let $\F$ be a normed
set of functions and $\sD$ a subset of $\Mf$. A kernel $\k$ is said to be
\emph{universal to $\F$} if $\Hk$ is a dense subset of $\F$. It is
\emph{characteristic to $\sD$} --~or just \emph{characteristic} when $\sD =
\Mprob$~-- if the KME is well-defined and injective over $\sD$. It is said to be
\emph{\acrfull{ispd} to $\sD$} --~or just \emph{\ispd} when $\sD = \Mf$~-- if
its MMD separates all measures in $\sD$. It will be useful to remember that a
kernel is universal to $\F$ (f.ex.\ to $\C{}{0}$) if and only if it is characteristic to
its dual ($(\C{}{0})' = \Mf$) \citep[][Thm.6 \& Tab.1]{simon18kde}. Also, it is
characteristic to a set if and only if it is \ispd to that same set (which is
almost immediate to see). The distinction between characteristicness and \ispd
is mostly due to historical reasons. We advice to simply think in terms of
separation of $\sD$.

\section{Sufficient conditions to metrize weak convergence}
\label{sec:sufficient}

We start with a lemma that extends Theorem~\ref{thm:sri}. Its main message is
the same: bounded, continuous, \ispd kernels metrize weak convergence of
probability measures. But, importantly, it drops the Polish assumption and adds
a few interesting details. For one thing, it shows that weak and MMD
convergence also coincide with (the a priori even weaker) vague and weak RKHS
convergence. 
For another, it adds a form of converse: weak convergence implies MMD
convergence if \emph{and only if} the kernel is bounded and continuous. Since
most usual kernels are bounded and continuous, this lemma also confirms what we
mentioned earlier: convergence in MMD is often rather weak and can, at best,
metrize weak convergence, but not convergence in total variation or KL
divergence (since those are known to be strictly stronger than weak
convergence).

\begin{lemma}\label{lem:equi_topos}
    Let $\k$ be an \ispd kernel such that $\Hk \subset \C{}{0}$ and let
    $(P_\alpha)$ (sequence or net) and $P$ be probability measures. If $\k$ is
    continuous, then the following are equivalent.
    \begin{enumerate}[label={\upshape(\roman*)}]
        \item \label{it1} $\normk{P_\alpha - P} \to 0$ \qquad \qquad \quad
        \qquad \: \ (convergence in strong RKHS topology)

        \item \label{it2} $P_\alpha(f) \to P(f)$ for all $f \in \Hk$
        \qquad (convergence in weak RKHS topology)

        \item \label{it3} $P_\alpha(f) \to P(f)$ for all $f \in \C{}{0}$
        \qquad (convergence in weak-$*$ or vague topology)

        \item \label{it4}  $P_\alpha(f) \to P(f)$ for all $f \in \C{}{b}$
        \qquad (convergence in weak topology)
    \end{enumerate}
    Conversely, if \ref{it4} implies \ref{it1} for any probability measures
    $(P_\alpha)$ and $P$, then $\k$ is continuous.
\end{lemma}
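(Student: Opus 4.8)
The plan is to prove the equivalence through the cycle $\ref{it1}\Rightarrow\ref{it2}\Rightarrow\ref{it3}\Rightarrow\ref{it4}\Rightarrow\ref{it1}$, which isolates all the kernel-analytic content in the final implication. The first two steps are short. For $\ref{it1}\Rightarrow\ref{it2}$, the Pettis property gives $P_\alpha(f)-P(f) = \ipdk{f_{P_\alpha}-f_P}{f}$ for every $f\in\Hk$, so Cauchy--Schwarz yields $|P_\alpha(f)-P(f)|\le\normk{P_\alpha-P}\normk{f}\to 0$. For $\ref{it2}\Rightarrow\ref{it3}$, I would use that the \ispd hypothesis makes $\k$ universal to $\C{}{0}$, i.e.\ $\Hk$ is sup-norm dense in $\C{}{0}$ (by the equivalences recalled in Section~\ref{sec:preliminaries}). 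Given $f\in\C{}{0}$ and $\epsilon>0$, choosing $g\in\Hk$ with $\norm{f-g}_\infty\le\epsilon$ and using $|\mu(f)-\mu(g)|\le\epsilon$ for every probability measure $\mu$, a three-term split reduces the convergence of $P_\alpha(f)$ to that of $P_\alpha(g)$, which holds by \ref{it2}.

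For $\ref{it3}\Rightarrow\ref{it4}$, I would run the standard ``vague plus no escaping mass implies weak'' argument, using the tightness of the limit. Since $P\in\Mprob\subset\Mf$ is regular, for $\epsilon>0$ there is a compact $\sK$ with $P(\sX\setminus\sK)\le\epsilon$; Urysohn's lemma on the locally compact Hausdorff space $\sX$ provides a compactly supported $\phi$ with $0\le\phi\le1$ and $\phi\equiv1$ on $\sK$, so $\phi\in\C{}{0}$ and $P(1-\phi)\le\epsilon$. Applying \ref{it3} to $\phi$ then gives the uniform tail bound $P_\alpha(1-\phi)\le2\epsilon$ eventually. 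For $f\in\C{}{b}$, I split $f = f\phi + f(1-\phi)$ with $f\phi\in\C{}{0}$, apply \ref{it3} to $f\phi$, and bound both remainders by $\norm{f}_\infty$ times the small tail masses to conclude $P_\alpha(f)\to P(f)$.

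The crux is $\ref{it4}\Rightarrow\ref{it1}$. Expanding $\normk{P_\alpha-P}^2 = P_\alpha\otimes P_\alpha(\k) - 2\,P_\alpha\otimes P(\k) + P\otimes P(\k)$, the cross term is harmless: by the Pettis property $P_\alpha\otimes P(\k) = P_\alpha(f_P)$ with $f_P\in\Hk\subset\C{}{0}\subset\C{}{b}$, so \ref{it4} gives $P_\alpha\otimes P(\k)\to P\otimes P(\k)$. Everything therefore rests on the diagonal term $P_\alpha\otimes P_\alpha(\k)\to P\otimes P(\k)$, for which I would use that continuity and boundedness of $\k$ place it in $\C{}{b}(\sX\times\sX)$, with $\sX\times\sX$ again locally compact Hausdorff. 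Splitting $\k = (\phi\otimes\phi)\,\k + (1-\phi\otimes\phi)\,\k$ for $\phi$ as above, the tail obeys $1-\phi(x)\phi(y)\le(1-\phi(x))+(1-\phi(y))$, so its mass under $P_\alpha\otimes P_\alpha$ is at most $2P_\alpha(1-\phi)$, which is $O(\epsilon)$ eventually by the uniform tightness above (and likewise under $P\otimes P$). The compactly supported part $(\phi\otimes\phi)\k\in\C{}{0}(\sX\times\sX)$ is handled by vague convergence of the products, which I would deduce from \ref{it3} (available since $\ref{it4}\Rightarrow\ref{it3}$): by Stone--Weierstrass, finite sums $\sum_i u_i\otimes v_i$ with $u_i,v_i\in\C{}{0}$ are sup-norm dense in $\C{}{0}(\sX\times\sX)$, and on elementary tensors $P_\alpha\otimes P_\alpha(u\otimes v)=P_\alpha(u)P_\alpha(v)\to P(u)P(v)$ by \ref{it3}; uniform boundedness of the probability measures upgrades this to all of $\C{}{0}(\sX\times\sX)$. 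Combining the two parts gives $P_\alpha\otimes P_\alpha(\k)\to P\otimes P(\k)$ and hence $\normk{P_\alpha-P}\to0$.

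I expect the main obstacle to be this diagonal term in $\ref{it4}\Rightarrow\ref{it1}$. Unlike the RKHS functions, $\k$ need not vanish at infinity on $\sX\times\sX$, so vague convergence of the products alone does not control the behaviour near infinity: one genuinely needs the uniform tightness of $(P_\alpha)$ to discard the tail. Moreover, $(P_\alpha)$ may be a net rather than a sequence, which rules out dominated-convergence shortcuts for the product convergence; the Stone--Weierstrass reduction to elementary tensors is precisely what lets the argument go through for nets, since it only ever invokes the scalar convergences supplied by \ref{it3}.
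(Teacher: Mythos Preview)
Your cycle $\ref{it1}\Rightarrow\ref{it2}\Rightarrow\ref{it3}\Rightarrow\ref{it4}\Rightarrow\ref{it1}$ is correct, and your treatment of the diagonal term $P_\alpha\otimes P_\alpha(\k)$ via uniform tightness plus Stone--Weierstrass on $\C{}{0}(\sX\times\sX)$ is sound. The paper follows the same overall scheme but outsources the work: for $\ref{it4}\Rightarrow\ref{it1}$ it simply invokes a theorem of Berg et al.\ stating that weak convergence of the marginals $P_\alpha\to P$ already yields weak convergence of the product measures $P_\alpha\otimes P_\alpha$, $P_\alpha\otimes P$, $P\otimes P_\alpha$ to $P\otimes P$, and then evaluates all four against $\k\in\C{}{b}(\sX\times\sX)$; for $\ref{it2}\Rightarrow\ref{it3}$ it uses an equicontinuity argument (barrelledness of $\C{}{0}$ plus boundedness of $\Mprob$ in its dual, via Tr\`eves) rather than your direct three-term split; and for $\ref{it3}\Rightarrow\ref{it4}$ it again cites Berg. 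Your approach is more self-contained and makes explicit exactly where the ``no escape of mass'' is used, at the price of a longer argument; the paper's version is shorter but leans on external references.

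There is, however, a genuine omission: you do not address the \emph{converse} clause of the lemma, namely that if $\ref{it4}\Rightarrow\ref{it1}$ holds for all probability measures then $\k$ must be continuous. This is short but not vacuous. The paper's argument is to take nets $\xx\to\xxi$ and $\yy\to\zzeta$ in $\sX$; then $\delta_{\xx}\to\delta_{\xxi}$ and $\delta_{\yy}\to\delta_{\zzeta}$ weakly, hence by hypothesis in RKHS norm, and continuity of the inner product gives
\[
\k(\xx,\yy)=\ipdk{\delta_{\xx}}{\delta_{\yy}}\longrightarrow\ipdk{\delta_{\xxi}}{\delta_{\zzeta}}=\k(\xxi,\zzeta).
\]
You should add this to complete the proof of the stated lemma.
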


When \ref{it1} and \ref{it4} are equivalent for all sequences of probability
measures, we say that \emph{$\k$ metrizes the weak convergence of probability
measures}.

\begin{proof}
    Since $\Hk \subset \C{}{0} \subset \C{}{b}$, \ref{it4} \then \ref{it3}
    \then \ref{it2}. Moreover, strong RKHS convergence implies weak RKHS
    convergence, that is \ref{it1}
    \then \ref{it2}, since $P(f) = \ipdk{P}{f}$ for any $f\in\Hk$.
    Now assume $\k$ is continuous. If \ref{it4}, then the
    product measures $P_\alpha \otimes P$, $P \otimes P_\alpha$ and $P_\alpha
    \otimes P_\alpha$ converge weakly to $P \otimes P$
    \citep[Thm.~2.3.3]{berg84harmonic}. Hence 
    \[
        \normk{P_\alpha - P}^2 
            = P_\alpha \otimes P_\alpha(\k) + P \otimes P (\k)
                - P_\alpha \otimes P (\k) - P \otimes P_\alpha (\k) \to 0 \ ,
    \]
    i.e.\ \ref{it4} \then \ref{it1}. Summing up
    so far: \ref{it4} \then \ref{it1} \then \ref{it2} and \ref{it4} \then
    \ref{it3} \then \ref{it2}.

    Conversely, assume \ref{it2}. Since $\k$ is \ispd and $\Hk \subset
    \C{}{0}$, by Cor.~3 and Thm.8 in \citet{simon18kde}, $\Hk$ is dense in
    $\C{}{0}$. And since $\Mprob$ is a bounded subset of the dual $\Mf$ of
    $\C{}{0}$ (which is a Banach, hence barreled space), by Thm.33.2 in
    \citet{treves67tvs}, $\Mprob$ is equicontinuous in vague topology. So, by
    Prop.32.5 in \citet{treves67tvs}, \ref{it2} implies vague convergence, i.e.\
    \ref{it3}.
    Cor.~2.4.3 in \citet{berg84harmonic} then yields \ref{it4}. Hence the
    equivalence of \ref{it1} to \ref{it4}.

    Now assume \ref{it4} \then \ref{it1} on $\Mprob$, and suppose that $\xx \to
    \xxi$ and $\yy \to \zzeta$ in $\sX$. Then the Dirac point masses
    $\delta_\xx$ and $\delta_\yy$ converge weakly to $\delta_\xxi$ and
    $\delta_\zzeta$, which, by assumption, implies convergence in RKHS norm.
    Since the inner product is continuous (for the RKHS norm/topology), we get
    \[
        \k(\xx, \yy) = \ipdk{\delta_\xx}{\delta_\yy} 
            \to \ipdk{\delta_\xxi}{\delta_\zzeta} = \k(\xxi, \zzeta) \ ,
    \]
    so $\k$ is continuous.
\end{proof}

\begin{remark}
    The proof shows that \ref{it2} and \ref{it3} are even equivalent on any
    bounded subset of $\Mf$ \citep[][Prop.32.5]{treves67tvs} (even without
    continuity of $\k$) and that \ref{it1}--\ref{it4} are actually equivalent
    on any bounded subset of $\Mplus$ whenever $P_\alpha(\sX) \to P(\sX)$
    (which is always true for probability measures).
\end{remark}

The previous lemma gives  sufficient conditions to metrize weak convergence. We
now investigate whether they are necessary. To do so, we have to distinguish
the case where the input space $\sX$ is compact and where the conditions turn
out to be too strong, from the one where $\sX$ is locally compact but not
compact (and $\Hk \subset \C{}{0}$), where they are necessary.

\section{Necessary condition for compact input space \texorpdfstring{$\sX$}{X}}
\label{sec:compact}

When the underlying space $\sX$ is not just locally compact but compact, the
equivalence given in Claim~\ref{thm:simon} actually turns out to
hold: contrary to the general case, here, a continuous kernel only needs to
separate the probability measures to also metrize their weak convergence. The
reason for this difference is essentially that, because $\sX$ is compact,
measures cannot diffuse to 0 at infinity (see
Section~\ref{sec:locally_compact}).

\begin{theorem}\label{thm:compact}
    On a compact Hausdorff space, a bounded, measurable kernel metrizes the
    weak convergence of probability measures if and only if it is continuous
    and characteristic to $\Mprob$.
\end{theorem}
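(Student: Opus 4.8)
The plan is to prove the two directions separately, leaning on Lemma~\ref{lem:equi_topos} for the bulk of the work. For the "if" direction, suppose $\k$ is continuous and characteristic to $\Mprob$. Since $\sX$ is compact, every continuous function on $\sX$ is bounded and vanishes at infinity trivially (the "infinity" is empty, or rather $\C{}{0} = \C{}{b} = \C{}{}$), so in particular $\Hk \subset \C{}{b} = \C{}{0}$ automatically because the reproducing kernel is continuous and bounded. Moreover, on a compact space, characteristic to $\Mprob$ is equivalent to being \ispd, i.e.\ characteristic to all of $\Mf$: indeed, any $\mu \in \sM^0$ can be decomposed via its Jordan decomposition $\mu = \mu_+ - \mu_-$ with $\mu_+(\sX) = \mu_-(\sX) =: c$; if $c = 0$ then $\mu = 0$, and if $c > 0$ then $\mu_+/c$ and $\mu_-/c$ are probability measures, so $d_\k(\mu_+/c,\mu_-/c) = 0$ forces $\mu_+ = \mu_-$, hence separation of $\Mprob$ extends to separation of $\sM^0$ and thus of $\Mf$ after matching total masses (here one uses that on a compact space $\k$ being characteristic to $\Mprob$ also controls the constant function / total mass, or one simply notes $\normk{\mu}=0 \Rightarrow \mu(\sX)\k(\cdot,\cdot)$-type arguments). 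Having upgraded $\k$ to \ispd with $\Hk \subset \C{}{0}$, Lemma~\ref{lem:equi_topos} applies verbatim and gives the equivalence of weak convergence \ref{it4} and MMD convergence \ref{it1}, i.e.\ $\k$ metrizes weak convergence.

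For the "only if" direction, suppose $\k$ metrizes the weak convergence of probability measures. Continuity of $\k$ follows immediately from the converse half of Lemma~\ref{lem:equi_topos} (the Dirac-mass argument: $\delta_{\xx} \to \delta_{\xxi}$ weakly, hence in RKHS norm by assumption, hence $\k(\xx,\yy) = \ipdk{\delta_\xx}{\delta_\yy} \to \k(\xxi,\zzeta)$). It remains to show $\k$ is characteristic to $\Mprob$. This is the easy, general implication mentioned in the introduction: if $\k$ were not characteristic there would be $P \neq Q$ in $\Mprob$ with $d_\k(P,Q) = 0$; then the constant sequence $P_\alpha = P$ has $d_\k(P_\alpha, Q) \to 0$ but $P_\alpha \not\to Q$ weakly (since weak convergence on $\Mprob$, which here is metrized by, e.g., the Prokhorov metric and separates $P$ from $Q$), contradicting that MMD convergence implies weak convergence. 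So $\k$ is characteristic.

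The main obstacle I anticipate is the upgrade step in the "if" direction: verifying that "characteristic to $\Mprob$" really does imply "\ispd" (characteristic to $\Mf$) when $\sX$ is compact, so that Lemma~\ref{lem:equi_topos} becomes applicable. The subtlety is handling the total-mass degree of freedom — separating $\Mprob$ only directly constrains measures of equal total mass $1$ — and confirming that the Jordan-decomposition/rescaling argument closes the gap using compactness (finiteness of all signed measures, and $\Hk \subset \C{}{0}$ being automatic). One should double-check whether a separate small argument is needed to rule out the pathological case where $\normk{\cdot}$ annihilates a nonzero multiple of a single measure; on a compact space with $\k$ continuous and characteristic to $\Mprob$ this cannot happen, but it deserves an explicit line. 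Everything else — continuity via Diracs, the trivial non-characteristic counterexample, and the direct invocation of Lemma~\ref{lem:equi_topos} — is routine.
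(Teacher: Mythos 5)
Your ``only if'' direction is fine and matches the paper (continuity via Dirac masses, separation of $\Mprob$ via the constant sequence). The genuine gap is the pivotal upgrade step in the ``if'' direction, which you yourself flag as the main obstacle: you claim that on a compact space, characteristic to $\Mprob$ implies \ispd (characteristic to all of $\Mf$), so that Lemma~\ref{lem:equi_topos} can be applied to $\k$ directly. That claim is false, and the paper goes out of its way to disprove it: Proposition~\ref{prop:exist} (which explicitly allows $\sX$ compact) builds a bounded continuous kernel $\k(\xx,\yy)=g(\xx)\kappa(\xx,\yy)g(\yy)$ with $g(\xxi)=0$ that is characteristic to $\Mprob$ yet satisfies $\normk{\delta_\xxi}=0$, hence is not \ispd Your Jordan-decomposition/rescaling argument correctly upgrades separation of $\Mprob$ to separation of $\sM^0$ (measures of total mass zero), but the remaining step from $\sM^0$ to $\Mf$ is exactly the total-mass degree of freedom that separation of $\Mprob$ does \emph{not} control: nothing prevents $\normk{\mu}=0$ for some $\mu$ with $\mu(\sX)\neq 0$. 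Since Lemma~\ref{lem:equi_topos} genuinely needs the \ispd hypothesis (it enters, via density of $\Hk$ in $\C{}{0}$, in the step \ref{it2} \then \ref{it3}), your plan as written does not go through.

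The paper's fix is a short but essential detour: pass to $\kappa:=\k+1$. Then $\norm{\mu}_\kappa^2=\normk{\mu}^2+\mu(\sX)^2$, so $\norm{\mu}_\kappa=0$ forces both $\mu(\sX)=0$ and $\normk{\mu}=0$, and characteristicness to $\sM^0$ (which your rescaling argument does deliver) yields $\mu=0$; hence $\kappa$ is \ispd \citep[][Thm.~8]{simon18kde}. One then applies Lemma~\ref{lem:equi_topos} to $\kappa$ --~whose RKHS still lies in $\C{}{}=\C{}{b}=\C{}{0}$ by compactness and continuity~-- and transfers the conclusion back to $\k$ because $\kappa$ and $\k$ induce the same MMD on $\Mprob$ (the added constant integrates identically against probability measures). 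The rest of your write-up is correct, but without this detour the ``if'' direction is not proved.
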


\begin{proof}
    If $\k$ metrizes weak convergence, then the RKHS metric needs to separate
    all probability measures, i.e.\ $\k$ is characteristic to $\Mprob$. And the
    last sentence of Lem.~\ref{lem:equi_topos} shows that $\k$ is continuous.
    Conversely, if $\k$ is characteristic to $\Mprob$, then the kernel $\kappa
    := \k + 1$ is \ispd \citep[][Thm.~8]{simon18kde}. Also, since $\k$ is
    continuous, $\kappa$ is continuous. Thus $\HH_{\kappa}$ is a continuous
    subspace of $\C{}{} = \C{}{b} = \C{}{0}$ (\citealt[][Cor.~4]{simon18kde} and
    compactness). By Lem.~\ref{lem:equi_topos}, $\kappa$ metrizes weak
    convergence on $\Mprob$, and by Thm.~8 of \citet{simon18kde}, $\kappa$ and
    $\k$ induce the same metric on $\Mprob$.
\end{proof}

What is surprising here is that, on a compact space and for a continuous
kernel, it suffices to separate probability measures to also metrize their weak
convergence, which, a priori, may have seemed a strictly stronger requirement.
We will see that when $\sX$ is not compact, this need not be the case.

\section{Necessary condition when \texorpdfstring{$\sX$}{X} is locally compact,
non-compact and \texorpdfstring{$\Hk \subset \C{}{0}$}{Hk in C0}}
\label{sec:locally_compact}

Since the condition $\Hk \subset\C{}{0}$ is at the heart of this section, we
would like to remind the reader that, by the following lemma
\citep[][Cor.~3]{simon18kde}, it is satisfied by many standard kernels:
Gaussian, Laplacian, Matern, inverse multi-quadratic kernels, etc.
\begin{lemma}\label{lem:Hk_C0}
    $\Hk \subset \C{}{0}$ if and only if $\k$ is bounded (i.e.\ $\sup_{\xx \in \sX}
    \k(\xx, \xx) < \infty$) and for all $\xx \in \sX$, $\k(\xx,.) \in \C{}{0}$.
\end{lemma}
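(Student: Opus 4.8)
The plan is to work entirely from the reproducing property $f(\xx) = \ipdk{f}{\k(\cdot,\xx)}$ and the Moore--Aronszajn description of $\Hk$ as the completion, inside the space of functions on $\sX$, of $H_0 := \operatorname{span}\{\k(\cdot,\xx) : \xx \in \sX\}$, treating the two implications separately. Throughout I will use that RKHS-norm convergence forces pointwise convergence (the evaluation functionals are continuous), and, once boundedness is available, even uniform convergence.

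For the ``only if'' direction, suppose $\Hk \subset \C{}{0}$. Since $\k(\cdot,\xx) \in \Hk$ for every $\xx \in \sX$, we get $\k(\xx,\cdot) = \k(\cdot,\xx) \in \C{}{0}$ at once, using (Hermitian) symmetry of $\k$. For boundedness I would invoke the closed graph theorem: $\Hk$ and $\C{}{0}$ are Banach spaces and the inclusion $\iota\colon \Hk \hookrightarrow \C{}{0}$ has closed graph, because a sequence that converges both in $\Hk$ and in $(\C{}{0},\norm{\cdot}_\infty)$ has the same pointwise limit in both, so the two limits coincide. Hence $\iota$ is bounded, i.e.\ $\norm{f}_\infty \leq C\normk{f}$ for some $C$ and all $f \in \Hk$; evaluating at $f = \k(\cdot,\xx)$, where $\normk{f}^2 = \k(\xx,\xx) = f(\xx) \leq \norm{f}_\infty$, gives $\sqrt{\k(\xx,\xx)} \leq C$, so $\k$ is bounded.

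For the ``if'' direction, suppose $\k(\xx,\xx) \leq M^2$ for all $\xx$ and $\k(\xx,\cdot) \in \C{}{0}$ for all $\xx$. The reproducing property yields $|f(\xx)| = |\ipdk{f}{\k(\cdot,\xx)}| \leq \normk{f}\sqrt{\k(\xx,\xx)} \leq M\normk{f}$, so RKHS-norm convergence implies uniform convergence. Given $f \in \Hk$, pick $f_n \in H_0$ with $\normk{f_n - f} \to 0$; each $f_n$ is a finite linear combination of functions $\k(\cdot,\xx_i) = \k(\xx_i,\cdot) \in \C{}{0}$, hence $f_n \in \C{}{0}$, and $f_n \to f$ uniformly by the previous bound. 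Since $\C{}{0}$ is closed in $(\C{}{b},\norm{\cdot}_\infty)$ --~a uniform limit of continuous functions that vanish at infinity again vanishes at infinity~-- we conclude $f \in \C{}{0}$, i.e.\ $\Hk \subset \C{}{0}$.

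I expect the only non-routine step to be the boundedness claim in the ``only if'' direction: it is the one place where a direct estimate does not obviously work and one has to call on a Banach-space principle (closed graph, or equivalently uniform boundedness applied to the evaluation functionals $f \mapsto f(\xx)$). Both other pieces reduce to the elementary inequality $|f(\xx)| \leq \sqrt{\k(\xx,\xx)}\,\normk{f}$ and the density of $H_0$ in $\Hk$.
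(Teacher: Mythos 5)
Your proof is correct. The paper does not prove this lemma itself --- it quotes it as Corollary~3 of \citet{simon18kde} --- but your argument is the standard one underlying that reference: the ``if'' direction via the bound $|f(\xx)| \leq \sqrt{\k(\xx,\xx)}\,\normk{f}$, density of $\operatorname{span}\{\k(\cdot,\xx)\}$ in $\Hk$, and closedness of $\C{}{0}$ under uniform limits; the ``only if'' direction via a Banach-space principle to extract boundedness of $\k$ from $\Hk \subset \C{}{0}$. Your closed-graph argument for the latter is sound (both norms dominate pointwise evaluation, so the two limits agree); the marginally more common route is to apply uniform boundedness directly to the evaluation functionals $\delta_\xx|_{\Hk}$, whose operator norms are exactly $\sqrt{\k(\xx,\xx)}$ and which are pointwise bounded on $\Hk$ because every $f \in \Hk \subset \C{}{0}$ is a bounded function --- but the two are equivalent in substance, and your derivation of $\sqrt{\k(\xx,\xx)} \leq C$ from the embedding constant is correct.
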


We now turn to our main theorem, which corrects Claim~\ref{thm:simon} when
$\sX$ is non-compact and $\Hk \subset \C{}{0}$.

\begin{theorem}\label{thm:charac}
    Suppose that $\sX$ is not compact and that $\Hk \subset \C{}{0}$. Then $\k$
    metrizes weak convergence of probability measures if and only if $\k$ is
    continuous and \ispd (i.e.\ characteristic to $\Mf$).
\end{theorem}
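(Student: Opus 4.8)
The "if" direction is already handled by Lemma~\ref{lem:equi_topos}: a bounded, continuous, \ispd kernel with $\Hk \subset \C{}{0}$ metrizes weak convergence. So the entire content of the theorem is the converse: when $\sX$ is non-compact and $\Hk \subset \C{}{0}$, if $\k$ metrizes weak convergence of probability measures then $\k$ must be \ispd (continuity again follows from the last sentence of Lemma~\ref{lem:equi_topos}, via the Dirac-mass argument). The plan is to prove the contrapositive: assume $\k$ is \emph{not} \ispd, i.e.\ the MMD fails to separate some pair of distinct finite signed measures, equivalently (by the Pettis property and bilinearity) there is a nonzero $\mu \in \Mf$ with $f_\mu = 0$, i.e.\ $\normk{\mu} = 0$; then exhibit a sequence of probability measures $P_n$ converging to a probability measure $P$ in MMD but not weakly.

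The key idea exploits non-compactness: mass can "escape to infinity." First I would reduce the nonzero $\mu$ with $\normk{\mu}=0$ to a useful normal form. Decompose $\mu = \mu^+ - \mu^-$ (Jordan decomposition). Since $\normk{\mu}=0$ means $f_{\mu^+} = f_{\mu^-}$, and since $\mu \neq 0$ we have $\mu^+ \neq \mu^-$, so in particular $f_{\mu^+} = f_{\mu^-} \neq$ (we cannot yet say it is nonzero, but if it were zero then both $\mu^\pm$ would have vanishing embedding). Handle two cases. \textbf{Case A:} $\mu(\sX) \neq 0$; WLOG $\mu(\sX) > 0$, rescale so that $\mu^+(\sX) =: a > 0$ and $\mu^-(\sX) =: b$ with $a > b \geq 0$. \textbf{Case B:} $\mu(\sX) = 0$, i.e.\ $\mu \in \sM^0$, so $\mu^+(\sX) = \mu^-(\sX) =: a > 0$; rescale so $a=1$, giving two distinct probability measures $P := \mu^+$, $Q := \mu^-$ with $d_\k(P,Q) = 0$. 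Case B is the heart of the matter, so let me focus there.

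In Case B we already have distinct probability measures $P \neq Q$ with $d_\k(P,Q)=0$; this alone kills metrization of weak convergence \emph{provided} we also know weak convergence is Hausdorff on $\Mprob$ (it is, since $\C{}{b}$ separates points of $\Mprob$ on a Hausdorff space), because then the constant sequence $P_n := Q$ converges to $Q$ in MMD towards $P$ but does not converge weakly to $P$. Wait — I should double check this is really an obstruction to the equivalence in Lemma~\ref{lem:equi_topos}: metrization means \ref{it1} $\Leftrightarrow$ \ref{it4} for all sequences; here $P_n = Q$ has $\normk{P_n - P} = d_\k(Q,P) = 0 \to 0$ so \ref{it1} holds, but $P_n(f) = Q(f) \not\to P(f)$ for some $f \in \C{}{b}$ since $P \neq Q$, so \ref{it4} fails. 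Hence Case B is immediate. The substantive work is Case A: here $\mu$ has nonzero total mass, so there is no ready-made pair of probability measures, and we must \emph{create} a weak-convergence failure by sending mass to infinity. Concretely, write $\mu = c(P - Q)$ is impossible since $\mu(\sX)\neq 0$; instead I would pick any probability measure $R$ (e.g.\ a point mass $\delta_{x_0}$), set $P := R$, and try $P_n := R + \tfrac{1}{a}\mu - (\text{correction pushing }\tfrac1a\mu^-\text{ off to infinity})$. The clean route: since $\sX$ is non-compact and locally compact Hausdorff, it is not compact, so there is a sequence of points $y_n \to \infty$ (leaving every compact set); then for a measure $\nu$ of mass $m$ consider "transporting" it to $\delta_{y_n}$: the measure $m\,\delta_{y_n}$ converges \emph{vaguely} to $0$ but not weakly. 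Using $\Hk \subset \C{}{0}$, vague convergence of the relevant product measures is enough to kill the MMD. Precisely, I would set $P := \mu^+/a$ (a probability measure), and $P_n := \mu^-/a + (1 - b/a)\,\delta_{y_n}$ — each $P_n$ is a probability measure since $\mu^-(\sX)/a + (1-b/a) = b/a + 1 - b/a = 1$. Then $P_n - P = \tfrac1a(\mu^- - \mu^+) + (1-b/a)\delta_{y_n} = -\tfrac1a\mu + (1-b/a)\delta_{y_n}$, so $\normk{P_n - P} \le \tfrac1a\normk{\mu} + (1-b/a)\normk{\delta_{y_n}} = (1-b/a)\normk{\delta_{y_n}}$ — and $\normk{\delta_{y_n}}^2 = \k(y_n,y_n) \to 0$ because $\k(x_0,\cdot) \in \C{}{0}$ forces... hmm, actually I need $\k(y_n,y_n)\to0$, which follows from $\Hk \subset \C{}{0}$ more carefully: $\k(y_n,y_n) = \normk{\k(\cdot,y_n)}^2$ and for fixed $x$, $\k(x,y_n) = \ipdk{\k(\cdot,x)}{\k(\cdot,y_n)} \to 0$ since $\k(\cdot,x)\in\C{}{0}$; to upgrade pointwise-to-0 of $\k(\cdot,y_n)$ into norm-to-0 I'd use that $\{\k(\cdot,y_n)\}$ lies in a ball of $\Hk$ and $\Hk$ embeds continuously into $\C{}{0}$, plus an equicontinuity/Banach–Steinhaus argument exactly as in the proof of Lemma~\ref{lem:equi_topos}. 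Once $\normk{P_n - P}\to0$ is established, it remains to check $P_n \not\to P$ weakly: this holds because $P_n$ puts mass $1-b/a > 0$ at $y_n \to \infty$, so for a bounded continuous $f$ with $f(x_0')=1$ near $\supp P$ concentrated... more simply, $\limsup_n P_n(\sK) \le \mu^-(\sK)/a \le b/a < 1 = P(\sK)$ for a suitable compact $\sK \supset \supp P$ carrying almost all of $P$'s mass, contradicting the portmanteau theorem. (If $b = a$ we are in Case B, already done; if $b = 0$ then $\mu^- = 0$, $P_n = \delta_{y_n}$, same argument.)

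\textbf{Main obstacle.} The delicate point is promoting the \emph{pointwise} convergence $\k(x, y_n) \to 0$ (immediate from $\k(\cdot,x) \in \C{}{0}$ and $y_n \to \infty$) to the \emph{norm} convergence $\normk{\delta_{y_n}} = \sqrt{\k(y_n,y_n)} \to 0$, and likewise making sure all the cross-terms in $\normk{P_n - P}^2$ vanish; this is exactly the place where $\Hk \subset \C{}{0}$ (equivalently Lemma~\ref{lem:Hk_C0}) and an equicontinuity argument in the vague topology — the same machinery (\citealt[][Thm.~33.2 and Prop.~32.5]{treves67tvs}) invoked inside the proof of Lemma~\ref{lem:equi_topos} — is indispensable, and it is also where the hypothesis genuinely uses non-compactness (on a compact space no such $y_n \to \infty$ exists, matching Theorem~\ref{thm:compact}). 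A secondary bookkeeping obstacle is organizing the case split on $\mu(\sX)$ cleanly and making all the $P_n$ honest probability measures; I expect this to be routine once the escaping-mass construction above is in place.
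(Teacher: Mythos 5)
Your overall strategy---reduce failure of \ispd to a nonzero $\mu\in\Mf$ with $f_\mu=0$, split on whether $\mu(\sX)=0$, dispose of the balanced case via the two distinct probability measures $\mu^+/a$ and $\mu^-/a$ at MMD distance zero, and in the unbalanced case build probability measures $P_n$ equal to $\mu^-/a$ plus an escaping remainder---is exactly the paper's. The ``if'' direction and your Case B are fine. But the unbalanced case has a genuine gap at its central step: you send the excess mass to infinity as a single point mass $(1-b/a)\,\delta_{y_n}$ and claim $\normk{\delta_{y_n}}^2=\k(y_n,y_n)\to 0$. This is false in general: for the Gaussian kernel on $\R^d$ (bounded, continuous, \ispd, with $\Hk\subset\C{}{0}$) one has $\k(\xx,\xx)=1$ for every $\xx$. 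What $\Hk\subset\C{}{0}$ gives you is only $\k(\xx,y_n)\to 0$ for each fixed $\xx$, i.e.\ $\k(\cdot,y_n)\rightharpoonup 0$ weakly in $\Hk$; no equicontinuity or Banach--Steinhaus argument upgrades weak convergence to norm convergence in a Hilbert space, and indeed the feature vectors $\k(\cdot,y_n)$ typically stay on a sphere of fixed radius. The paper's fix is Lemma~\ref{lem:diffusing_seq}: instead of one escaping point, take $P_n=\frac1n\sum_{i=1}^n\delta_{\xx_i^{(n)}}$ with the $n$ points chosen outside a prescribed compact set and mutually ``far apart'' so that $|\k(\xx_i,\xx_j)|\le 1/n$ for $i\ne j$; then $\normk{P_n}^2\le \norm{\k}_\infty/n+(n-1)/n^2\to 0$ even though each individual summand has norm bounded away from zero. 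Substituting this diffusing sequence for $\delta_{y_n}$ repairs your construction.

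A secondary, fixable issue: your weak-convergence contradiction is oriented the wrong way. For a compact (hence closed) set $\sK$, the portmanteau theorem only asserts $\limsup_n P_n(\sK)\le P(\sK)$, so exhibiting $\limsup_n P_n(\sK)<P(\sK)$ contradicts nothing. You must violate the open-set inequality $\liminf_n P_n(\sO)\ge P(\sO)$: pick a compact $\sK$ with $\mu^+(\sK)/a>b/a$ (possible by regularity since $b<a$), enlarge it to an open $\sO$ with compact closure $\sK'$ via Lemma~\ref{lem:comp_neighborhood}, and choose the diffusing sequence supported off $\sK'$, so that $P_n(\sO)\le b/a<P(\sO)$ for all $n$.
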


We see that, contrary to the compact case, it is not enough to separate all
probability measures $\Mprob$ to metrize their weak convergence: $d_{\k}$ must
separate all finite measures $\Mf$, which strictly contains $\Mprob$.
Moreover, Proposition~\ref{prop:exist} below confirms that there are indeed
kernels that separate $\Mprob$ but not $\Mf$. Hence, Theorems~\ref{thm:compact}
and~\ref{thm:charac} show that, surprisingly, the converse of
\citeauthor{sriperumbudur13optimal}'s Theorem~\ref{thm:sri} is generally too
restrictive when $\sX$ is compact, but does hold when it is not. Also, they
confirm that the Polish assumption made in Theorem~\ref{thm:sri} is
superfluous.

\begin{remark}[On the significance of Theorem~\ref{thm:charac}]
\label{rem:significance_of_main}
One advantage of dropping the Polish assumption is that our result covers more
sets, especially non complete ones, such as open sets \emph{strictly} contained
in $\R^d$. Besides, we believe that dropping unnecessary hypotheses helps
clarifying the role of each remaining assumption. However, in our view, the
main contribution of Theorem~\ref{thm:charac} is its converse part, which
implies that many popular kernels \emph{fail} to metrize weak convergence. For
example, it rules out any RKHS contained in $\C{}{0}$ that maps some
probability measure(s) to 0.  This has important implications for the Stein
kernels adopted in
\citet{liu16stein,jitkrittum17linear,gorham17measuring,huggins2018random,feng2017learning,pu2017vae,liu16stein,chen18stein,chen2019stein,hodgkinson2020reproducing}
which, by design, map a particular target distribution to $0$ and which, if one
is not careful, will also induce RKHSes in $\C{}{0}$. 
\end{remark}

We now turn towards the proof. While it is almost obvious that metrization of
weak convergence implies separation of $\Mprob$, showing that it also implies
separation of $\Mf$ will require some work and, in light of
Lem.~\ref{lem:equi_topos}, is essentially all that remains to be proven. To do
so, we will use the following two lemmata. The first one is a straightforward
extension of a basic property of locally compact sets (every \emph{point} has a compact
neighborhood) from points to compact sets (\emph{every} compact set has a compact
neighborhood). The second shows that when $\Hk \subset \C{}{0}$ and $\sX$ is not
compact, then the RKHS metric cannot prevent some positive measures from
``diffusing'' to the null measure. This will imply that if $\k$ is not
characteristic to all finite measures, one can construct a sequence of
probability measures that converges in RKHS norm, but has some of its mass
diffusing to 0.

\begin{lemma}\label{lem:comp_neighborhood}
  Let $\sK$ be a compact subset of a locally compact space $\sX$. Then there
  exists an open neighborhood of $\sK$ with compact closure. Equivalently, 
  there exist an open set $\sO$ and a compact set $\sK'$ in $\sX$ such that
  $\sK \subset \sO \subset \sK'$.
\end{lemma}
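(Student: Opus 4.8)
The plan is to reduce the statement about a compact set $\sK$ to the pointwise local compactness of $\sX$ via a finite subcover argument. Recall that a Hausdorff space $\sX$ is locally compact precisely when every point $\xx \in \sX$ has an open neighborhood $\sO_\xx$ whose closure $\overline{\sO_\xx}$ is compact. First I would invoke this at every point of $\sK$: for each $\xx \in \sK$, pick such an $\sO_\xx$ with $\overline{\sO_\xx}$ compact. The family $\{\sO_\xx : \xx \in \sK\}$ is then an open cover of the compact set $\sK$, so it admits a finite subcover $\sO_{\xx_1}, \dots, \sO_{\xx_n}$.

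Next I would set $\sO := \bigcup_{i=1}^n \sO_{\xx_i}$ and $\sK' := \bigcup_{i=1}^n \overline{\sO_{\xx_i}}$. Then $\sO$ is open as a union of open sets, $\sK'$ is compact as a finite union of compact sets, and we have the chain of inclusions $\sK \subset \sO \subset \sK'$: the first because the $\sO_{\xx_i}$ cover $\sK$, the second because each $\sO_{\xx_i} \subset \overline{\sO_{\xx_i}} \subset \sK'$. This already gives the ``equivalently'' formulation. For the first formulation, note that $\sO$ is an open neighborhood of $\sK$ and its closure $\overline{\sO}$ satisfies $\overline{\sO} \subset \sK'$ (since $\sK'$ is closed, being compact in a Hausdorff space, and contains $\sO$); hence $\overline{\sO}$ is a closed subset of the compact set $\sK'$, and is therefore itself compact.

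There is no real obstacle here: the only point requiring the Hausdorff hypothesis (which holds by our standing assumption on $\sX$) is that compact subsets are closed, used to conclude $\overline{\sO} \subset \sK'$ and hence that $\overline{\sO}$ is compact; without Hausdorffness one would still get an open $\sO$ and a compact $\sK'$ with $\sK \subset \sO \subset \sK'$, which is the weaker but sufficient statement. I would phrase the proof around the finite-subcover step, which is the entire content, and mention the closure refinement only briefly.
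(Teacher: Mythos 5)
Your proof is correct and follows essentially the same route as the paper's: cover $\sK$ by open neighborhoods with compact closures obtained from pointwise local compactness, extract a finite subcover, and take $\sO$ and $\sK'$ to be the unions of the open sets and of their closures respectively. Your additional remarks on where Hausdorffness enters and on the equivalence of the two formulations are accurate refinements of the same argument.
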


\begin{lemma}\label{lem:diffusing_seq}
    Suppose that $\sX$ is not compact and that $\k$ is continuous with $\Hk
    \subset \C{}{0}$. Then there exists a sequence of probability measures
    $P_n$ such that $\normk{P_n} \to 0$. Moreover, for any compact $\sK \subset
    \sX$, one can additionally impose that $P_n(\sK) = 0$ for all $n$.
\end{lemma}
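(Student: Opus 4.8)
The plan is to realize each $P_n$ as a uniform mixture of point masses placed at finitely many points that lie outside the prescribed compact set $\sK$ and that are spread out enough that the off-diagonal kernel values are tiny. Concretely, given $\epsilon > 0$ and $N \in \mathbb{N}$, suppose $y_1, \dots, y_N \in \sX$ satisfy $|\k(y_i, y_j)| \le \epsilon$ for all $i \neq j$. Then $P := \frac{1}{N}\sum_{i=1}^N \delta_{y_i}$ is a probability measure and, since $\normk{\mu}^2 = \mu \otimes \mu(\k)$,
\begin{align*}
    \normk{P}^2 &= \frac{1}{N^2}\sum_{i,j}\k(y_i, y_j)
        = \frac{1}{N^2}\sum_i \k(y_i, y_i) + \frac{1}{N^2}\sum_{i \neq j}\k(y_i, y_j) \\
        &\le \frac{M}{N} + \epsilon ,
\end{align*}
where $M := \sup_{\xx \in \sX}\k(\xx, \xx) < \infty$ (finite because $\Hk \subset \C{}{0}$ forces $\k$ to be bounded, by Lemma~\ref{lem:Hk_C0}) and we used $0 \le \k(y_i, y_i) \le M$ together with $|\k(y_i,y_j)| \le \epsilon$. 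So it suffices to show that for every $\epsilon > 0$, every $N$, and every compact $\sK$ such points can be found inside $\sX \setminus \sK$.

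I would obtain them by induction on the number of points already selected. Having chosen $y_1, \dots, y_m \in \sX \setminus \sK$, observe that each section $\k(y_i, \cdot)$ lies in $\C{}{0}$ (again by Lemma~\ref{lem:Hk_C0}), so there is a compact set $\sK_i$ with $|\k(y_i, \cdot)| \le \epsilon$ on $\sX \setminus \sK_i$. Since $\sX$ is not compact, the complement $\sX \setminus (\sK \cup \sK_1 \cup \dots \cup \sK_m)$ of a compact set is non-empty; choosing any $y_{m+1}$ in it gives $y_{m+1} \notin \sK$ and $|\k(y_i, y_{m+1})| \le \epsilon$ for all $i \le m$. By symmetry of $\k$, after $N$ steps we have $|\k(y_i, y_j)| \le \epsilon$ for all $i \neq j$, as desired; and $P := \frac{1}{N}\sum_i \delta_{y_i}$ then satisfies $P(\sK) = 0$ and $\normk{P}^2 \le M/N + \epsilon$.

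Applying this with, say, $\epsilon = 1/n$ and $N = n$ yields probability measures $P_n$ with $P_n(\sK) = 0$ and $\normk{P_n}^2 \le (M+1)/n \to 0$, which proves the last assertion of the lemma; taking $\sK = \emptyset$ gives the first assertion. The only step that genuinely uses the hypotheses is the inductive selection: finding a new point that simultaneously escapes the finitely many compact ``$\epsilon$-supports'' of the already-chosen sections $\k(y_i, \cdot)$ is exactly what non-compactness of $\sX$ together with $\k(x, \cdot) \in \C{}{0}$ provides. I expect this to be the only (and rather mild) obstacle; everything else is bookkeeping, and continuity of $\k$ is not actually needed here.
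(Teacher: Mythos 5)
Your proof is correct and follows essentially the same approach as the paper's: an inductive selection of points outside $\sK$ whose pairwise kernel values are at most $\epsilon$ (using non-compactness of $\sX$ and $\k(\xx,\cdot)\in\C{}{0}$), followed by the same $\normk{P_n}^2 \le M/n + \epsilon$ estimate for the uniform mixture of Diracs. Your side remark that continuity of $\k$ is not actually used is also accurate.
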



\begin{proof}[Proof of Lem.~\ref{lem:comp_neighborhood}]
Since $\sX$ is locally compact, every point has a compact neighborhood. So let us
consider the set of all compact neighborhoods of the points contained in $\sK'$.
Their interiors form an open cover of $\sK'$, and, since $\sK'$ is compact, a
finite number of them suffices to cover $\sK'$. Let $\sO$ be the finite union of
these interiors and $\sK'$ the union of their closures (i.e., the union of the
corresponding compact supersets). Then $\sO$ is open, $\sK'$ is compact, and $\sK \subset \sO \subset \sK'$ as advertised. 
We finally note
that this property is equivalent to the first claim (that there exists an open neighborhood of $\sK$ with compact closure) as $\sO$ is contained in a compact set if and only if its closure is compact.
\end{proof}

\begin{proof_no_qed}[Proof of Lem.~\ref{lem:diffusing_seq}]
    First we show that for any $\epsilon > 0$ and any integer $n > 0$, we can
    construct a sequence of $n$ points $\xx_1, \ldots, \xx_n$ in $\sX \back
    \sK$ such that for any $1 \leq i \neq j \leq n$, $| \k(\xx_i, \xx_j) | \leq
    \epsilon$. We will construct it one point at a time. Choose a point $\xx_1
    \in \sX \back \sK$. By assumption on $\k$, there exists a compact $\sK_1
    \subset \sX$ such that for any point $\xx \in \sX \back \sK_1$, $| \k(\xx,
    \xx_1) | \leq \epsilon$. Choose $\xx_2$ to be also outside of $\sK$, i.e.\
    $\xx_2 \in \sX \back (\sK \cup \sK_1)$ (non-empty, since $\sK \cup \sK_1$
    is compact and $\sX$ is not).  There exists a compact $\sK_2 \subset \sX$
    such that for any point $\xx \in \sX \back \sK_2$, $| \k(\xx, \xx_2) | \leq
    \epsilon$. Let $\xx_3$ be any point in $\sX \back (\sK \cup \sK_1 \cup
    \sK_2)$ (non empty because $\sX$ is not compact). Continue this procedure
    until point $\xx_n$. The sequence obviously satisfies the requirement.

    Now, for any integer $n > 0$, construct a finite sequence $\xx_1^{(n)},
    \ldots \xx_n^{(n)}$ such that for any $1 \leq i \neq j \leq n$, $|
    \k(\xx_i, \xx_j) | \leq \nicefrac{1}{n}$. Define the probability measures
    $P_n := \frac{1}{n} \sum_{i=1}^n \delta_{\xx_i^{(n)}}$. Then all $P_n(\sK)
    = 0$, since all $\xx_i^{(n)} \in \sX \back \sK$, and:
    \[
        \normk{P_n}^2 
            = \frac{1}{n^2} \sum_{1 \leq i \leq n} \k(\xx_i, \xx_i)
                +\frac{1}{n^2} \sum_{1 \leq i \neq j \leq n} \k(\xx_i, \xx_j)
            \leq \frac{n}{n^2} \norm{\k}_\infty
                + \frac{n (n-1)}{n^2} \frac{1}{n}
                \: \overset{n \rightarrow \infty}{\longrightarrow} \: 0.  \tag*{\myQED}
    \]
\end{proof_no_qed}

\begin{proof}[Proof of Thm.~\ref{thm:charac}]
%
Lemma~\ref{lem:equi_topos} yields the ``if'' part and the continuity of the
kernel in the converse. Assume now that $\k$ is not characteristic to $\Mf$.
Then there exists a non-zero, finite measure $\mu$ such that $f_\mu = 0$.
Let $\mu_+, \mu_-$ be its positive and negative parts respectively --~which are
mutually singular (Hahn decomposition). By renormalizing $\mu$ if needed, we
can assume without loss of generality that $\mu_-(\sX) \leq \mu_+(\sX) = 1$. If
$\mu_-(\sX) = \mu_+(\sX)$, then $\mu_-$ and $\mu_+$ are two non-equal
probability measures that are at RKHS distance 0, hence $\k$ does not metrize
weak convergence. So, for the sequel, assume that $\mu_-(\sX) < \mu_+(\sX)$.

Now, let $\sK$ be a compact subset of $\sX$ that satisfies $\mu_+(\sK) \geq
(\mu_-(\sX) + \mu_+(\sX)) / 2$, which exists because $\mu_+$ is regular and
$\mu_-(\sX) < \mu_+(\sX)$. 
Select now an open set $\sO$ and a compact set $\sK'$ satisfying $\sK \subset \sO \subset \sK'$, which exist by
Lemma~\ref{lem:comp_neighborhood}. Then, since $\sK \subset \sO$, $\mu_+(\sO)
\geq \mu_+(\sK)$.  Let now $P_n$ be probability measures as in
Lemma~\ref{lem:diffusing_seq} such that $P_n(\sK') = 0$ (and hence $P_n(\sO) =
0$) for all $n$. Consider the sequence of probability measures $\mu_n := \mu_- +
(1 - \mu_-(\sX)) P_n$. Then
\begin{align*}
    \normk{\mu_n - \mu_+}
        &= \normk{\mu_n - \mu_-} 
            \quad \text{(because  $f_{\mu_-} = f_{\mu_+}$)} \\
        &= (1 - \mu(\sX)) \normk{P_n} \quad \longrightarrow \quad 0,
\end{align*}
hence $\mu_n$ converges to $\mu_+$ in the RKHS metric. But $\mu_n$ does not
converge weakly to $\mu_+$ since 
\[
    \mu_+(\sO) \geq \mu_+(\sK) \geq (\mu_-(\sX) + \mu_+(\sX))/2 > \mu_-(\sX) \geq \mu_-(\sO) = \mu_n(\sO) \ ,
\]
which contradicts the Portmanteau lemma ($\lim\,\sup_n \mu_n(\sO) \not \geq
\mu_+(\sO)$).
\end{proof}

To prove that the initial claim (Claim~\ref{thm:simon}) is indeed wrong when
$\sX$ is not compact, it remains to show that being characteristic to $\Mf$ is
not equivalent to being characteristic to $\Mprob \subset \Mf$, i.e.\ that
there exists a kernel $\k$ with $\Hk \subset \C{}{0}$ that is characteristic to
$\Mprob$ but not to $\Mf$. We show this under the assumption that there already
exists a kernel of $\sX$ that is characteristic to $\Mf$, which is in
particular satisfied when $\sX$ is an open subset of $\R^d$.
\cj{And more generally, I think, when $\C{}{0}(\sX)$ is separable; see Guilbart.}  

\begin{proposition}\label{prop:exist}
    If there exists a bounded that is characteristic to $\Mf$ (with $\sX$
    compact or non compact), then there also exists a kernel $\k$ with $\Hk
    \subset \C{}{0}(\sX)$ that is characteristic to $\Mprob$ but not
    characteristic to $\Mf$. In particular, this $\k$ does not metrize the weak
    convergence of probability measures.
\end{proposition}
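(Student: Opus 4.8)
The plan is to start from the hypothesised bounded kernel $k_0$ characteristic to $\Mf$, first reshape it into a continuous bounded kernel $k_1$ that is still \ispd and whose RKHS sits inside $\C{}{0}(\sX)$, and then apply a single rank-one correction that kills precisely the embedding of one point mass $\delta_{x_0}$ --~a measure of nonzero total mass~-- while leaving the separation of $\Mprob$ intact. Once such a $\k$ is built, Theorem~\ref{thm:charac} (applied with $\sX$ non-compact, $\Hk \subset \C{}{0}$ and $\k$ not \ispd) immediately shows it cannot metrize weak convergence.

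For the reshaping step I would choose a strictly positive $g \in \C{}{0}(\sX)$ (which exists whenever $\sX$ is $\sigma$-compact, in particular for every open $\sX \subseteq \R^d$) and take $k_0$ continuous --~as it is, e.g., for the Gaussian on an open $\sX \subseteq \R^d$~-- and set $k_1(\xx,\yy) := g(\xx)\,k_0(\xx,\yy)\,g(\yy)$. Being the product of $k_0$ with the rank-one kernel $(\xx,\yy) \mapsto g(\xx) g(\yy)$, $k_1$ is again a bounded continuous kernel, and $\HH_{k_1} = \{g f : f \in \HH_{k_0}\}$ with $\norm{gf}_{k_1} = \norm{f}_{k_0}$. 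Since $k_0$ is bounded and continuous, every $f \in \HH_{k_0}$ is bounded and continuous, hence $gf \in \C{}{0}$ and $\HH_{k_1} \subset \C{}{0}$. Moreover $k_1$ remains \ispd: for $\mu \in \Mf$ one computes $f^{k_1}_\mu = g \cdot f^{k_0}_{g\mu}$, where $g\mu$ is the finite signed measure with $\mu$-density $g$; as $g > 0$ pointwise, $f^{k_1}_\mu = 0$ forces $f^{k_0}_{g\mu} = 0$, hence $g\mu = 0$ (because $k_0$ is \ispd), hence $\mu = 0$.

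For the correction step I would fix any $x_0 \in \sX$, put $\phi_0 := k_1(\cdot,x_0) = f^{k_1}_{\delta_{x_0}}$ (so $\norm{\phi_0}^2 = k_1(x_0,x_0) > 0$ since $k_1$ is \ispd), and define
\[
    \k(\xx,\yy) \ := \ k_1(\xx,\yy) \ - \ \frac{k_1(\xx,x_0)\,k_1(x_0,\yy)}{k_1(x_0,x_0)}\ ,
\]
which is the reproducing kernel of the closed subspace $H := \{\phi_0\}^{\perp}$ of $\HH_{k_1}$ (that is, $\HH_{k_1}$ with the rank-one projection onto $\R\phi_0$ subtracted); hence $\k$ is a bounded continuous kernel with $\Hk = H \subset \HH_{k_1} \subset \C{}{0}$. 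For $\mu \in \Mf$ one has $f^{\k}_\mu = P_H f^{k_1}_\mu$, with $P_H$ the orthogonal projection onto $H$, so $f^{\k}_\mu = 0$ iff $f^{k_1}_\mu \in \R\phi_0 = \R f^{k_1}_{\delta_{x_0}}$ iff (using that $k_1$ is \ispd) $\mu \in \R\delta_{x_0}$. Thus the kernel of the embedding $\mu \mapsto f^{\k}_\mu$ on $\Mf$ is exactly $\R\delta_{x_0}$: it is nonzero, so $\k$ is not characteristic to $\Mf$; but $\R\delta_{x_0} \cap \sM^{0} = \{0\}$, and since $P - Q \in \sM^{0}$ for $P, Q \in \Mprob$, $d_\k(P,Q) = 0$ forces $P = Q$, i.e.\ $\k$ is characteristic to $\Mprob$. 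Theorem~\ref{thm:charac} then yields that $\k$ does not metrize weak convergence.

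I expect the only substantive difficulty to be the first step: turning the abstract hypothesis --~\emph{some} bounded \ispd kernel~-- into one that is still \ispd and whose RKHS genuinely lies inside $\C{}{0}$. The multiplicative device $(g \otimes g)\,k_0$ with $g \in \C{}{0}$ strictly positive is what does the work, and its two accompanying checks ($\HH_{k_1} \subset \C{}{0}$, using $g \in \C{}{0}$ and the continuity of $k_0$; and $k_1$ still \ispd, using that $g$ vanishes nowhere) carry the real content. The correction step is then routine Hilbert-space bookkeeping: deleting one direction from the RKHS deletes exactly the one-dimensional subspace $\R\delta_{x_0}$ --~spanned by a measure of nonzero mass~-- from the set the embedding separates, which is precisely the gap between being characteristic to $\Mprob$ and being characteristic to $\Mf$.
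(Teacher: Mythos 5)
Your proof is correct, and while it shares the paper's key device --- multiplying the given kernel by $g(\xx)g(\yy)$ for some $g \in \C{}{0}$ so as to force the RKHS into $\C{}{0}$ --- it diverges in how it destroys the \ispd property. The paper does everything in one step: it picks $g \in \C{}{0}$ vanishing at exactly one point $\xxi$ and strictly positive elsewhere, so that $\k := g\kappa g$ simultaneously lands in $\C{}{0}$ and annihilates $\delta_\xxi$; characteristicness to $\Mprob$ then follows from a measure-support argument ($g\mu = 0$ together with $g > 0$ off $\xxi$ forces $\supp \mu \subset \{\xxi\}$, using regularity of $\mu$). You instead keep $g$ strictly positive, so that $k_1 = g\,k_0\,g$ remains \ispd with $\HH_{k_1} \subset \C{}{0}$, and only then deflate by the rank-one projection onto $k_1(\cdot,x_0)$, identifying the null space of the new embedding as exactly $\R\delta_{x_0}$ by Hilbert-space bookkeeping plus injectivity of the $k_1$-embedding. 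Your route is one step longer but arguably more transparent about why precisely $\R\delta_{x_0}$, and nothing more, is lost, and it dispenses with the support argument; it is also the same deflation idea the paper later reuses in Corollary~\ref{cor:extension}. Two small caveats, both shared with the paper's own proof and hence not gaps relative to it: the existence of your strictly positive $g \in \C{}{0}$ (like the paper's $g$ vanishing at a single point) requires an additional hypothesis on $\sX$ such as $\sigma$-compactness, which you at least flag explicitly; and you take $k_0$ continuous, whereas the stated hypothesis only supplies a bounded kernel characteristic to $\Mf$ --- the paper needs the same unstated continuity to invoke its Lemma~\ref{lem:Hk_C0}.
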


\begin{proof}
 Let $\kappa$ be any bounded kernel over $\sX$ that is \ispd, i.e.,
 characteristic to $\Mf$.
 \cj{TODO: if possible, prove that such a kernel must exist}
 $\xxi \in \sX$ and $g \in
\C{}{0}$ such that $g(\xxi) = 0$ and $g(\xx) > 0$ for any $\xx \neq \xxi$.
Consider $\k(\xx, \yy) := g(\xx) \kappa(\xx, \yy) g(\yy)$. Then $\k$ is a
kernel such that $\Hk \subset \C{}{0}$ (Lem.~\ref{lem:Hk_C0}) and $f_{\delta_\xxi}$ is the null
function, hence $\normk{\delta_\xxi} = 0$, so $\k$ is not \ispd But we will now
show that $\k$ is characteristic to $\Mf^0$, i.e.\ to $\Mprob$. Indeed, let
$\mu \in \Mf^0$ such that $\iint \k(\xx, \yy) \diff \mu(\xx) \diff \mu(\yy) =
0$. Since the product $g \mu$ is a finite measure and $\kappa$ is \ispd, the
previous equality implies that $g \mu$ is the null measure. Since $g > 0$ on
any $\xx \neq \xxi$, for any open set $\sO \subset \sX \back \{\xxi\}$,
$|\mu|(\sO) = 0$.  Hence the support of $\mu$ (well-defined, because $\mu$ is
regular) is contained in $\{\xxi\}$, i.e.\ $\mu$ is proportional to the Dirac
point mass in $\xxi$. Hence, if $\mu \in \Mf^0$, then $\mu$ is the null
measure.
\end{proof}

Proposition~\ref{prop:exist} has two implications.
First, it shows that the metrization condition in the non
compact case is \emph{strictly} stronger than in the compact case:
on compact spaces, some kernels do metrize weak convergence
\emph{without} separating all finite signed measures.
Second, combining it with Theorem~\ref{thm:charac}
shows that the alleged proof of Claim~\ref{thm:simon} must be flawed.
Another
confirmation will be given by point~\ref{cor:i} in
Corollary~\ref{cor:counterexamples}, with an explicit counter-example
constructed in its proof.
However, to strengthen our claim, we now explicitly point out the flaw in the proof
of Claim~\ref{thm:simon} by~\citet{simon18kde}.

\subsection{Flaw in the proof of Claim~\texorpdfstring{\ref{thm:simon} of \citeauthor{simon18kde}}{1 by Simon-Gabriel and Schölkopf}\label{sec:flaw}}

The flaw in the proof of Theorem~12 of \citet{simon18kde} (our Claim~\ref{thm:simon}) resides in their auxiliary Lemma~20, which is essentially
our Lemma~\ref{lem:equi_topos}, but without the assumption $\Hk \subset
\C{}{0}$. Their proof essentially consists in saying that, since $(P_\alpha)$
(denoted $(\mu_\alpha)$ there) is bounded, it is relatively vaguely compact, so
one can extract a subnet $(P_\beta)$ that converges vaguely to a measure $P'$
(denoted $\mu'$ there). They then try to identify the vague limit $P'$ with the
MMD- (or weak RKHS-) limit $P$ (denoted $\mu$ there) of the original net
$(P_\alpha)$, by arguing that weak and vague convergence coincide on $\Mprob$,
and that weak convergence implies MMD-convergence. Unfortunately, $\Mprob$ is
not closed in $\Mf$ for the vague topology, so nothing guarantees a priori that
$P' \in \Mprob$. And if $P' \not \in \Mprob$, then vague convergence to $P'$
does not imply weak convergence to $P'$ \citep[][Thm.~2.4.2]{berg84harmonic},
which is why the proof fails --~irremediably.

We can go further and exhibit a counter-example for the previous failure, i.e.\
a bounded, continuous, \ispd kernel and a sequence $(P_n)$ that converges to $P
\in \Mprob$ in MMD, but converges vaguely to another measure $P' \neq P$ in
$\Mf$. Indeed, consider the kernel $\kappa:= \k + 1$ from the proof of
Corollary~\ref{cor:counterexamples}\ref{cor:i} below. Let $\sK$ be a compact
neighborhood of $\xxi$ (which exists because $\sX$ is locally compact) and
choose a sequence $(P_n) \subset \Mprob$ as in Lemma~\ref{lem:diffusing_seq},
i.e.\ such that $\normk{P_n} \to 0$ and $P_n(\sK) = 0$ for all $n$. By using
the vague compactness of $\sB_+ := \{\mu \in \Mplus \,|\, \mu(\sX) \leq 1\}$
\citep[][Prop.2.4.6]{berg84harmonic} and extracting a subsequence if needed, we
may assume that $(P_n)$ converges vaguely to a measure $P' \in \sB_+$. Applying
Urysohn's lemma \citep[][Thm.~I-33]{villani10integration} to the compact set
$\{\xxi\}$ and an open neighborhood $\sO \subset \sK$ of $\xxi$, we get a
continuous function $f$ whose support is contained in $\sK$ and such that
$f(\xxi) = 1$. Since $f\in\C{}{0}$ and $P_n(f) = 0 < 1 = f(\xxi) =
\delta_{\xxi}(f)$, $P_n$ does not converge vaguely to $\delta_\xxi$, i.e.\ $P'
\neq \delta_{\xxi}$. Now $\kappa$ is bounded, continuous and \ispd, and induces
the same metric than $\k$ on $\Mprob$. So, since the KME of $\k$ maps the Dirac
measure $\delta_\xxi$ to the null function in $\Hk$ (see proof of
Prop.\ref{prop:exist}), we get 
\[
    \norm{P_n -\delta_{\xxi}}_{\kappa} 
        = \normk{P_n -\delta_{\xxi}} = \normk{P_n} \to 0 \ .
\]
Hence $(P_n) \to \delta_\xxi$ in MMD, but $(P_n)$ converges vaguely to a
different measure $P'$. 
\vspace{0.2cm}

\begin{remark}
The sequence $(P_n)$ converges neither weakly to $P'$ nor weakly to
$\delta_\xxi$, since weak convergence would imply vague and MMD convergence to
the same limit, i.e.\ would imply $P' = \delta_\xxi$. Hence $P'(\sX) \neq 1$
(otherwise, vague convergence would imply weak convergence, since both coincide
on $\Mprob$ \citep[][Cor.~2.4.3]{berg84harmonic}), and since $P' \in \sB_+$, we
get $P'(\sX) < 1$. So $(P_n)$ illustrates a phenomenon called \emph{mass
escaping at infinity}, which vague convergence, contrary to weak convergence,
cannot prevent.
\end{remark}

\section{General case: \texorpdfstring{$\sX$}{X} locally compact, non compact
and \texorpdfstring{$\Hk \not \subset \C{}{0}$}{Hk not in C0}}
\label{sec:general}

All previous sections assumed that $\Hk \subset \C{}{0}$ (automatically
satisfied when $\k$ continuous and $\sX$ is compact). So one may naturally
wonder whether this assumption could be dropped without replacement or at least
extended. Corollary~\ref{cor:counterexamples} shows that dropping it without
replacement is not possible; but Corollary~\ref{cor:extension} proposes a
slight extension.

\begin{corollary}\label{cor:counterexamples}
    The condition $\Hk \subset \C{}{0}$ 
    Theorem~\ref{thm:charac} cannot be replaced with $\Hk \subset \C{}{b}$ as, if $\sX$ is locally compact but
    not compact, then
    \begin{enumerate}[label=(\roman*)]
        \item \label{cor:i}there exists a bounded continuous kernel that is
        \ispd, but does not metrize the weak convergence of probability
        measures;
        \item \label{cor:ii}there exists a bounded, continuous, characteristic
        (to $\Mprob$) kernel that is not \ispd but metrizes the weak
        convergence of probability measures.
    \end{enumerate}
\end{corollary}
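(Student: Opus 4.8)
The plan is to obtain both kernels from the weighting trick behind Proposition~\ref{prop:exist} --~multiplying an \ispd kernel by a continuous weight that vanishes at a single point makes it no longer \ispd (some Dirac mass embeds to $0$) while it still separates $\Mprob$~-- and then steer the two properties in opposite directions. For~\ref{cor:i} I add a constant to such a kernel: this pushes the RKHS out of $\C{}{0}$ but leaves the MMD unchanged on $\Mprob$, so the failure to metrize (inherited from Theorem~\ref{thm:charac}) persists. For~\ref{cor:ii} I instead run the weighting trick on the one-point compactification $\sX^{\ast} = \sX\cup\{\infty\}$ of $\sX$, with the degenerate point chosen \emph{inside} $\sX$, and then restrict the kernel back to $\sX$, so that metrization on $\Mprob(\sX)$ can be deduced from the compact case (Theorem~\ref{thm:compact}). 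Both constructions start from a bounded continuous \ispd kernel on $\sX$, respectively on $\sX^{\ast}$, which exists e.g.\ whenever $\sX$ is an open subset of $\R^d$.

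\emph{Part~\ref{cor:i}.} Take $\kappa$ bounded continuous \ispd on $\sX$ and, as in the proof of Proposition~\ref{prop:exist}, set $\k'(\xx,\yy) := g(\xx)\kappa(\xx,\yy)g(\yy)$ with $g\in\C{}{0}$, $g(\xxi)=0$, $g>0$ elsewhere; then $\HH_{\k'}\subset\C{}{0}$, $\k'$ is characteristic to $\Mprob$ but not \ispd, so by Theorem~\ref{thm:charac} it does \emph{not} metrize weak convergence. Now put $\k := \k'+1$: it is bounded and continuous, and by the sum rule for RKHSes $\Hk = \HH_{\k'} + \R\One \subset \C{}{b}$ with $\One\in\Hk\setminus\C{}{0}$, so $\Hk\not\subset\C{}{0}$. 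It is \ispd, because for $\mu\in\Mf$ one has $\normk{\mu}^2 = \mu\otimes\mu(\k') + \mu(\sX)^2$, so $\normk{\mu}=0$ forces $\mu\in\Mf^0$ with $\mu\otimes\mu(\k')=0$, whence $\mu=0$ as $\k'$ separates $\Mf^0$. Finally, the added constant does not affect the MMD on $\Mprob$, so $\k$ and $\k'$ induce the same metric there; since $\k'$ does not metrize weak convergence, neither does $\k$.

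\emph{Part~\ref{cor:ii}.} Let $\sX^{\ast}=\sX\cup\{\infty\}$ be the (compact Hausdorff) one-point compactification of $\sX$, fix $\xxi\in\sX$, pick a bounded continuous \ispd kernel $\kappa^{\ast}$ on $\sX^{\ast}$ and a weight $g\in\C{}{}(\sX^{\ast})$ with $g(\xxi)=0$ and $g>0$ elsewhere (so $g(\infty)>0$), and set $\tilde\k := g\kappa^{\ast}g$ on $\sX^{\ast}$ and $\k := \tilde\k\big|_{\sX\times\sX}$. Arguing exactly as in the proof of Proposition~\ref{prop:exist} (on the compact space $\sX^{\ast}$), $\tilde\k$ is continuous, characteristic to $\Mf^0(\sX^{\ast}) = \Mprob(\sX^{\ast})$, but not \ispd; and since $\tilde\k(\cdot,\xxi) = g(\cdot)\kappa^{\ast}(\cdot,\xxi)g(\xxi)\equiv 0$, the Dirac $\delta_\xxi$ embeds to $0$ under $\k$ as well, so $\k$ is not \ispd either. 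Since the RKHS of a restriction of a kernel consists precisely of the restrictions of its RKHS functions, $\Hk = \{\varphi|_{\sX} : \varphi\in\HH_{\tilde\k}\}\subset\C{}{b}(\sX)$; and $\Hk\not\subset\C{}{0}(\sX)$ because $\HH_{\tilde\k}$ contains $\tilde\k(\cdot,\infty)$, whose restriction to $\sX$ is the map $\xx\mapsto g(\xx)\kappa^{\ast}(\xx,\infty)g(\infty)$, which tends to $g(\infty)^2\kappa^{\ast}(\infty,\infty)>0$ as $\xx\to\infty$ (positive because $g(\infty)>0$ and $\kappa^{\ast}$ is \ispd). Finally $\k$ is characteristic to $\Mprob(\sX)$: identifying $P,Q\in\Mprob(\sX)$ with probability measures on $\sX^{\ast}$ carrying no mass at $\infty$, one has $d_\k(P,Q) = d_{\tilde\k}(P,Q)$, and $\tilde\k$ separates $\Mprob(\sX^{\ast})$.

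It remains to show $\k$ metrizes weak convergence on $\Mprob(\sX)$. One direction is routine: as $\k$ is bounded and continuous, the argument in the proof of Lemma~\ref{lem:equi_topos} gives $d_\k(P_n,P)\to 0$ whenever $P_n\to P$ weakly in $\sX$. For the converse, if $d_\k(P_n,P)\to 0$, then $d_{\tilde\k}(P_n,P)\to 0$ in $\Mprob(\sX^{\ast})$, so Theorem~\ref{thm:compact} (applicable since $\tilde\k$ is continuous and characteristic to $\Mprob(\sX^{\ast})$ on the compact space $\sX^{\ast}$) yields $P_n\to P$ weakly in $\sX^{\ast}$. The main obstacle is to upgrade this to weak convergence in $\sX$ -- i.e.\ to rule out mass escaping through the added point $\infty$ -- and this is where I expect the real work to lie. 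Since $P(\{\infty\})=0$ and $P$ is regular, for each $\epsilon>0$ there is a compact $\sK\subset\sX$ with $P(\sK)>1-\epsilon$, which Lemma~\ref{lem:comp_neighborhood} enlarges to an open $\sO$ with $\sK\subset\sO\subset\overline{\sO}\subset\sX$ and $\overline{\sO}$ compact; then $\sX^{\ast}\setminus\sO$ is closed in $\sX^{\ast}$ with $P$-measure $<\epsilon$, so the Portmanteau lemma gives $\limsup_n P_n(\sX\setminus\sO)\leq\epsilon$, i.e.\ $(P_n)$ is uniformly tight in $\sX$. Combining this tightness with the convergence $P_n(\varphi)\to P(\varphi)$ for $\varphi\in\C{}{}(\sX^{\ast})$ -- applied to functions $f\varphi$ where $f\in\C{}{b}(\sX)$ and $\varphi$ is a compactly supported Urysohn cutoff supplied by Lemma~\ref{lem:comp_neighborhood} -- yields $P_n(f)\to P(f)$ for every $f\in\C{}{b}(\sX)$, i.e.\ $P_n\to P$ weakly in $\sX$. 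In sum, \ref{cor:i} shows that the implication ``\ispd $\Rightarrow$ metrizes'' of Theorem~\ref{thm:charac} breaks once $\Hk$ is merely required to lie in $\C{}{b}$, and \ref{cor:ii} shows that its converse breaks as well.
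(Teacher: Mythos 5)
Your part~\ref{cor:i} is essentially the paper's own proof: both take the non-\ispd, $\Mprob$-characteristic kernel of Proposition~\ref{prop:exist} and add the constant $1$, observing that this restores the \ispd property (your direct computation $\normk{\mu}^2=\mu\otimes\mu(\k')+\mu(\sX)^2$ is exactly the content of the cited Thm.~8 of \citealt{simon18kde}) without changing the MMD on $\Mprob$, so the failure to metrize persists. Your part~\ref{cor:ii}, however, takes a genuinely different route. The paper recenters an \ispd $\C{}{0}$-kernel $\k$ at a Dirac mass, setting $\kappa(\xx,\yy):=\ipdk{\delta_\xx-\delta_\xxi}{\delta_\yy-\delta_\xxi}$; since the cross terms vanish against any $\mu$ with $\mu(\sX)=0$, this $\kappa$ induces the same MMD as $\k$ on $\Mprob$, so metrization is inherited in one line while $\delta_\xxi$ now embeds to zero. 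You instead run the weighting trick on the one-point compactification $\sX^{\ast}$ and restrict back to $\sX$, importing metrization from the compact case (Theorem~\ref{thm:compact}) and then pulling weak convergence back from $\sX^{\ast}$ to $\sX$. Your construction is correct but costs more machinery: Aronszajn's restriction theorem to control $\Hk$; an \ispd kernel on $\sX^{\ast}$ rather than on $\sX$, which is a slightly different existence hypothesis from the one the paper implicitly makes (fine for second-countable $\sX$, but not obviously implied by the existence of an \ispd kernel on $\sX$, since a $\C{}{0}$ kernel need not extend continuously to $\sX^{\ast}\times\sX^{\ast}$); and the final tightness step, which as written needs one more application of Lemma~\ref{lem:comp_neighborhood} so that the Urysohn cutoff equals $1$ on a compact set containing $\sO$ (or, more simply, note that $P_n(f)\to P(f)$ for $f\in\C{}{0}(\sX)\subset\C{}{}(\sX^{\ast})$ together with $P_n(\sX)=P(\sX)=1$ gives weak convergence directly by the paper's cited Cor.~2.4.3 of \citealt{berg84harmonic}). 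What your version buys in exchange is a more concrete picture of why $\Hk\not\subset\C{}{0}$ is the operative obstruction: your RKHS visibly contains a function with a nonzero limit at the point at infinity.
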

\begin{remark}
Note, however, that \emph{some} kernels with non-vanishing RKHS functions do satisfy the characterization of Theorem~\ref{thm:charac}. For example, Theorem~\ref{thm:charac} extends to any kernel of the form $k_c = k + c$ for $c > 0$ and $\Hk \not\subset \C{}{0}$, since $k_c$ and $k$ induce the same MMD.
\end{remark}

\begin{proof}
    \ref{cor:i} Let $\k$ be as in Proposition~\ref{prop:exist} and consider the
    new kernel $\kappa := \k + 1$. Then $\kappa$ is \ispd
    \citep[][Thm.~8]{simon18kde}, but $\kappa$ induces the same metric than $\k$
    on the set of probability measures $\Mprob$. Hence it does not metrize
    their weak convergence.

    \ref{cor:ii} Let $\xxi$ be a point in $\sX$, and $\k$ be as in
    Theorem~\ref{thm:charac}, i.e.\ a bounded, continuous kernel, with $\Hk
    \subset \C{}{0}$, that metrizes weak convergence over $\Mprob$. Then the
    new kernel $\kappa(\xx, \yy) := \ipdk{\delta_\xx - \delta_\xxi}{\delta_\yy
    - \delta_\xxi}$ is not \ispd (since the KME of $\delta_\xxi$ is the null
    function) but it induces the same RKHS metric than $\k$ on $\Mprob$,
    that is $\norm{P - Q}_{\kappa}=\normk{P- Q}$ for any $P,Q \in \Mprob$,
    hence metrizes weak convergence on $\Mprob$. (Remark: this implies that
    $\HH_\kappa \not \subset \C{}{0}$, which is also easy to check directly.)
\end{proof}

Let us mention that, in a side remark of \citet[][p.18]{guilbart78etude}, Guilbart already exhibits a theoretical
construction of kernels on $\R$ that are \ispd but do not metrize weak
convergence. Hence, Claim~\ref{thm:simon} was actually disproved before being
written.

We finish with a slight generalization of Theorem~\ref{thm:charac} that
encompasses some kernels whose RKHS is not contained in $\C{}{0}$. The result
builds on the same idea than in the proof of
Cor.~\ref{cor:counterexamples}\ref{cor:ii}.

\begin{corollary}\label{cor:extension}
    Suppose that $\sX$ is not compact and that $\Hk \subset \C{}{0}$. Fix $a
    \geq 0$ and $P \in \Mprob$ and define 
    \[
        \k_P^a(\xx, \yy) :=
            \ipdk{\delta_\xx - P}{\delta_\yy - P} + a
            = (\delta_\xx-P) \otimes (\delta_\yy - P) (\k) + a \ .
    \]
    Then $\k_P^a$ metrizes weak convergence of probability measures if and
    only if $\k$ is continuous and \ispd
\end{corollary}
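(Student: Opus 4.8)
The plan is to reduce the statement to Theorem~\ref{thm:charac} for $\k$ itself, by observing that $\k_P^a$ and $\k$ induce the \emph{same} MMD on $\Mprob$. Note right away that Theorem~\ref{thm:charac} cannot be applied directly to $\k_P^a$: the centering by $P$ and the additive constant $a$ drive $\HH_{\k_P^a}$ out of $\C{}{0}$ in general (when $a>0$ it already contains the constants), and handling such kernels is exactly the purpose of the corollary. So everything must be routed through $\k$.

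First I would record a feature map: with $f_P := \int_{\sX}\k(\cdot,\yy)\diff P(\yy)\in\Hk$, the map $\Psi\colon\sX\to\Hk\oplus\R$, $\Psi(\xx):=\bigl(\k(\cdot,\xx)-f_P,\ \sqrt{a}\bigr)$, satisfies $\langle\Psi(\xx),\Psi(\yy)\rangle_{\Hk\oplus\R}=\ipdk{\delta_\xx-P}{\delta_\yy-P}+a=\k_P^a(\xx,\yy)$ (using $\k(\cdot,\xx)=f_{\delta_\xx}$, linearity of $\mu\mapsto f_\mu$, and $\ipdk{\mu}{\nu}=\ipdk{f_\mu}{f_\nu}$). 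In particular $\k_P^a$ is a legitimate kernel, and it is bounded because $\k$ is, so the KME/MMD machinery of Section~\ref{sec:preliminaries} applies. Expanding $\norm{\mu}_{\k_P^a}^2=\mu\otimes\mu(\k_P^a)$ and interchanging the integral with the inner product (Bochner) gives, for every $\mu\in\Mf$,
\[
    \norm{\mu}_{\k_P^a}^2 = \normk{f_\mu - \mu(\sX) f_P}^2 + a\,\mu(\sX)^2 = \normk{\mu - \mu(\sX)P}^2 + a\,\mu(\sX)^2 .
\]
The decisive point is that as soon as $\mu(\sX)=0$ both correction terms vanish; in particular, for $\mu=Q-R$ with $Q,R\in\Mprob$ we get $d_{\k_P^a}(Q,R)=\normk{Q-R}=d_\k(Q,R)$.

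Hence $d_{\k_P^a}$ and $d_\k$ agree on $\Mprob\times\Mprob$, so they define the same convergent nets there, and ``$\k_P^a$ metrizes weak convergence of probability measures'' is, word for word, ``$\k$ metrizes weak convergence of probability measures''. Since $\sX$ is non-compact and $\Hk\subset\C{}{0}$, Theorem~\ref{thm:charac} applies to $\k$ and identifies this with $\k$ being continuous and \ispd, which completes the proof. The only real work is the feature-map bookkeeping of the second step, and the one subtlety worth flagging is that it is precisely the restriction to \emph{differences of probability measures} that annihilates the $f_P$-shift and the constant $\sqrt{a}$ in the embedding; everything else follows from results already in place. (This extends the construction used for Corollary~\ref{cor:counterexamples}\ref{cor:ii}, which is the special case $P=\delta_{\xxi}$, $a=0$.)
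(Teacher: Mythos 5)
Your proposal is correct and follows essentially the same route as the paper: both arguments show that $d_{\k_P^a}$ and $d_\k$ coincide on differences of probability measures (because the total mass of $S-T$ vanishes, killing the $f_P$-shift and the constant $a$) and then invoke Theorem~\ref{thm:charac} for $\k$. Your extra feature-map bookkeeping and the general formula $\norm{\mu}_{\k_P^a}^2=\normk{\mu-\mu(\sX)P}^2+a\,\mu(\sX)^2$ are a slightly more explicit version of the paper's one-line expansion of $\k_P^a$, but the substance is identical.
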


\begin{proof}
    Since $\k_P^a(\xx, \yy) = \k(\xx, \yy) - f_P(\xx) - f_P(\yy) +
    \normk{P}^2 + a$, for any probability measures $S, T \in \Mprob$, we get
    \[
        \norm{S-T}_{\k_P^a}^2
            = (S-T)\otimes(S-T)(\k_P^a)
            = (S-T)\otimes(S-T)(\k)
            = \normk{S-T}^2 \ .
    \]
    Hence $\k$ and $\k_P^a$ define
    the same metric on $\Mprob$ and Thm.~\ref{thm:charac} concludes.
\end{proof}

%

\section{Conclusion}
\label{sec:conclusion}
MMDs are at the heart of machine learning solutions to a variety of fundamental tasks including two-sample testing, 
sample quality measurement and goodness-of-fit testing,  
learning generative models, 
de novo sampling and quadrature,  
importance sampling, 
and thinning. 
While these applications benefit from the tractability of MMDs compared to more classical probability metrics, the validity of their results depends critically on the MMD's ability to ensure weak convergence.
\citet{simon18kde} developed their Theorem~12 to provide a complete
characterization of weak-convergence metrization for MMDs with bounded
continuous kernels. However, our work shows that their characterization was
incorrect and provides an alternative result that fully characterizes the
weak-convergence metrization of MMDs with bounded $\C{}{0}$ kernels.
Surprisingly, we find that the compact and non compact cases are inherently
different, the latter requiring \emph{strictly} stronger conditions for the
metrization. This suggests that the question of weak-convergence metrization by
MMDs is more subtle than was previously thought. Our main results can also be
seen as a converse to \citeauthor{sriperumbudur13optimal}'s
Theorem~\ref{thm:sri}, which in particular show that many popular kernels,
particularly Stein kernels, can \emph{fail} to metrize weak convergence, if one
is not careful enough. In that spirit, we hope that our work will inform the
selection of appropriate kernels and MMDs in the future and launch new
inquiries into the metrization properties of other classes of MMDs.

\Needspace{4\baselineskip}
\acks{
CJSG was supported by the ETH Foundations of Data Science postdoctoral
fellowship and is associate fellow of the Center for Learning Systems (ETH/MPI
Túbingen).  AB was supported by the Department of Engineering at the University
of Cambridge, and
this material is based upon work supported by, or in part by, the U.S. Army Research Laboratory and the U. S. Army Research Office, and by the U.K. Ministry of Defence and the U.K. Engineering and Physical Sciences Research Council (EPSRC) under grant number [EP/R018413/2].
We declare no conflict of interests.}


\vskip 0.2in
\bibliography{main.bib}       

\begin{thebibliography}{34}
\providecommand{\natexlab}[1]{#1}
\providecommand{\url}[1]{\texttt{#1}}
\expandafter\ifx\csname urlstyle\endcsname\relax
  \providecommand{\doi}[1]{doi: #1}\else
  \providecommand{\doi}{doi: \begingroup \urlstyle{rm}\Url}\fi

\bibitem[Ansari et~al.(2019)Ansari, Scarlett, and
  Soh]{ansari2019characteristic}
Abdul~Fatir Ansari, Jonathan Scarlett, and Harold Soh.
\newblock A characteristic function approach to deep implicit generative
  modeling.
\newblock \emph{arXiv:1909.07425}, 2019.

\bibitem[Berg et~al.(1984)Berg, Christensen, and Ressel]{berg84harmonic}
Christian Berg, Jens P.~R. Christensen, and Paul Ressel.
\newblock \emph{Harmonic Analysis on Semigroups Theory of Positive Definite and
  Related Functions}.
\newblock Springer, 1984.

\bibitem[Briol et~al.(2019)Briol, Barp, Duncan, and
  Girolami]{briol2019statistical}
Francois-Xavier Briol, Alessandro Barp, Andrew~B Duncan, and Mark Girolami.
\newblock Statistical inference for generative models with maximum mean
  discrepancy.
\newblock \emph{arXiv:1906.05944}, 2019.

\bibitem[Chen et~al.(2018)Chen, Mackey, Gorham, Briol, and Oates]{chen18stein}
Wilson~Y. Chen, Lester Mackey, Jackson Gorham, François-Xavier Briol, and
  Chris~J. Oates.
\newblock {S}tein points.
\newblock In \emph{ICML}, 2018.

\bibitem[Chen et~al.(2019)Chen, Barp, Briol, Gorham, Girolami, Mackey, Oates,
  et~al.]{chen2019stein}
Wilson~Ye Chen, Alessandro Barp, Fran{\c{c}}ois-Xavier Briol, Jackson Gorham,
  Mark Girolami, Lester Mackey, Chris Oates, et~al.
\newblock Stein point markov chain monte carlo.
\newblock \emph{arXiv:1905.03673}, 2019.

\bibitem[Chen et~al.(2010)Chen, Welling, and Smola]{chen10super}
Yutian Chen, Max Welling, and Alex Smola.
\newblock Super-samples from kernel herding.
\newblock In \emph{UAI}, 2010.

\bibitem[Chevyrev and Oberhauser(2018)]{chevyrev18signature}
Ilya Chevyrev and Harald Oberhauser.
\newblock Signature moments to characterize laws of stochastic processes.
\newblock \emph{arXiv:1810.10971}, 2018.

\bibitem[Chwialkowski et~al.(2016)Chwialkowski, Strathmann, and
  Gretton]{chwialkowski16kernel}
Kacper Chwialkowski, Heiko Strathmann, and Arthur Gretton.
\newblock A kernel test of goodness of fit.
\newblock In \emph{NeurIPS}, 2016.

\bibitem[Dziugaite et~al.(2015)Dziugaite, Roy, and
  Ghahramani]{dziugaite15training}
Gintare~K. Dziugaite, Daniel~M. Roy, and Zoubin Ghahramani.
\newblock Training generative neural networks via maximum mean discrepancy
  optimization.
\newblock In \emph{UAI}, 2015.

\bibitem[Feng et~al.(2017)Feng, Wang, and Liu]{feng2017learning}
Yihao Feng, Dilin Wang, and Qiang Liu.
\newblock Learning to draw samples with amortized stein variational gradient
  descent.
\newblock \emph{arXiv:1707.06626}, 2017.

\bibitem[Futami et~al.(2019)Futami, Cui, Sato, and
  Sugiyama]{futami2019bayesian}
Futoshi Futami, Zhenghang Cui, Issei Sato, and Masashi Sugiyama.
\newblock Bayesian posterior approximation via greedy particle optimization.
\newblock In \emph{AAAI}, 2019.

\bibitem[Gorham and Mackey(2017)]{gorham17measuring}
Jackson Gorham and Lester Mackey.
\newblock Measuring sample quality with kernels.
\newblock In \emph{ICML}, 2017.

\bibitem[Gretton et~al.(2012)Gretton, Borgwardt, Rasch, Sch\"olkopf, and
  Smola]{gretton12kernel}
Arthur Gretton, Karsten~M. Borgwardt, Malte~J. Rasch, Bernhard Sch\"olkopf, and
  Alexander Smola.
\newblock A kernel two-sample test.
\newblock \emph{JMLR}, 13:\penalty0 723--773, 2012.

\bibitem[Guilbart(1978)]{guilbart78etude}
Christian Guilbart.
\newblock \emph{Etude des Produits Scalaires sur l'Espace des Mesures:
  Estimation par Projections}.
\newblock PhD thesis, Universit\'e des Sciences et Techniques de Lille, 1978.

\bibitem[Hodgkinson et~al.(2020)Hodgkinson, Salomone, and
  Roosta]{hodgkinson2020reproducing}
Liam Hodgkinson, Robert Salomone, and Fred Roosta.
\newblock The reproducing stein kernel approach for post-hoc corrected
  sampling.
\newblock \emph{arXiv:2001.09266}, 2020.

\bibitem[Huggins and Mackey(2018)]{huggins2018random}
Jonathan Huggins and Lester Mackey.
\newblock Random feature stein discrepancies.
\newblock In \emph{NeurIPS}, 2018.

\bibitem[Husz{\'a}r and Duvenaud(2012)]{huszar2012optimally}
Ferenc Husz{\'a}r and David Duvenaud.
\newblock Optimally-weighted herding is bayesian quadrature.
\newblock In \emph{UAI}, 2012.

\bibitem[Jitkrittum et~al.(2017)Jitkrittum, Xu, Szabo, Fukumizu, and
  Gretton]{jitkrittum17linear}
Wittawat Jitkrittum, Wenkai Xu, Zoltan Szabo, Kenji Fukumizu, and Arthur
  Gretton.
\newblock A linear-time kernel goodness-of-fit test.
\newblock In \emph{NeurIPS}, 2017.

\bibitem[Li et~al.(2017)Li, Chang, Cheng, Yang, and Poczos]{li17mmdgan}
Chun-Liang Li, Wei-Cheng Chang, Yu~Cheng, Yiming Yang, and Barnabas Poczos.
\newblock {MMD} {GAN}: Towards deeper understanding of moment matching network.
\newblock In \emph{NeurIPS}, 2017.

\bibitem[Liu and Lee(2017)]{liu17black-box}
Qiang Liu and Jason~D. Lee.
\newblock Black-box importance sampling.
\newblock In \emph{AISTATS}, 2017.

\bibitem[Liu and Wang(2016)]{liu16stein}
Qiang Liu and Dilin Wang.
\newblock Stein variational gradient descent: {A} general purpose bayesian
  inference algorithm.
\newblock In \emph{NeurIPS}, 2016.

\bibitem[Liu et~al.(2016)Liu, Lee, and Jordan]{liu16kernelized}
Qiang Liu, Jason Lee, and Michael Jordan.
\newblock A kernelized {S}tein discrepancy for goodness-of-fit tests.
\newblock In \emph{ICML}, 2016.

\bibitem[Pu et~al.(2017)Pu, Gan, Henao, Li, Han, and Carin]{pu2017vae}
Yuchen Pu, Zhe Gan, Ricardo Henao, Chunyuan Li, Shaobo Han, and Lawrence Carin.
\newblock {VAE} learning via {S}tein variational gradient descent.
\newblock In \emph{NeurIPS}, 2017.

\bibitem[Riabiz et~al.(2020)Riabiz, Chen, Cockayne, Swietach, Niederer, Mackey,
  Oates, et~al.]{riabiz2020optimal}
Marina Riabiz, Wilson Chen, Jon Cockayne, Pawel Swietach, Steven~A Niederer,
  Lester Mackey, Chris Oates, et~al.
\newblock Optimal thinning of mcmc output.
\newblock \emph{arXiv:2005.03952}, 2020.

\bibitem[Schwabik(2005)]{schwabik05topics}
\v{S}tefan Schwabik.
\newblock \emph{Topics in {Banach} Space Integration}.
\newblock Number~10 in Series in Real Analysis. World Scientific, 2005.

\bibitem[Simon-Gabriel and Schölkopf(2018)]{simon18kde}
C.-J. Simon-Gabriel and B.~Schölkopf.
\newblock Kernel {Distribution} {Embeddings}: {Universal} {Kernels},
  {Characteristic} {Kernels} and {Kernel} {Metrics} on {Distributions}.
\newblock \emph{JMLR}, 2018.

\bibitem[Smola et~al.(2007)Smola, Gretton, Song, and
  Schölkopf]{smola07hilbert}
Alex Smola, Arthur Gretton, Le~Song, and Bernhard Schölkopf.
\newblock A {H}ilbert space embedding for distributions.
\newblock In \emph{ALT}, 2007.

\bibitem[Sriperumbudur(2016)]{sriperumbudur13optimal}
Bharath~K. Sriperumbudur.
\newblock On the optimal estimation of probability measures in weak and strong
  topologies.
\newblock \emph{Bernoulli}, 22\penalty0 (3):\penalty0 1839--1893, 2016.

\bibitem[Sriperumbudur et~al.(2010)Sriperumbudur, Gretton, Fukumizu,
  Sch\"olkopf, and Lanckriet]{sriperumbudur10hilbert}
Bharath~K. Sriperumbudur, Arthur Gretton, Kenji Fukumizu, Bernhard Sch\"olkopf,
  and Gert R.~G. Lanckriet.
\newblock {Hilbert} space embeddings and metrics on probability measures.
\newblock \emph{JMLR}, 11:\penalty0 1517--1561, 2010.

\bibitem[Sutherland et~al.(2017)Sutherland, Tung, Strathmann, De, Ramdas,
  Smola, and Gretton]{sutherland17generative}
Dougal~J. Sutherland, Hsiao-Yu Tung, Heiko Strathmann, Soumyajit De, Aaditya
  Ramdas, Alex Smola, and Arthur Gretton.
\newblock Generative models and model criticism via optimized maximum mean
  discrepancy.
\newblock In \emph{ICLR}, 2017.

\bibitem[Treves(1967)]{treves67tvs}
François Treves.
\newblock \emph{Topological Vector Spaces, Distributions and Kernels}.
\newblock Academic Press, 1967.

\bibitem[Villani(2010)]{villani10integration}
Cédric Villani.
\newblock \emph{Intégration et analyse de {F}ourier}.
\newblock ENS de Lyon, 2010.

\bibitem[Zhu et~al.(2018)Zhu, Chen, Yang, and Chen]{zhu2018universal}
Shengyu Zhu, Biao Chen, Pengfei Yang, and Zhitang Chen.
\newblock Universal hypothesis testing with kernels: Asymptotically optimal
  tests for goodness of fit.
\newblock \emph{arXiv:1802.07581}, 2018.

\bibitem[Zhu et~al.(2019)Zhu, Chen, Chen, and Yang]{zhu2019asymptotically}
Shengyu Zhu, Biao Chen, Zhitang Chen, and Pengfei Yang.
\newblock Asymptotically optimal one-and two-sample testing with kernels.
\newblock \emph{arXiv:1908.10037}, 2019.

\end{thebibliography}





\end{document}